\pgfplotsset{compat=1.17}
\begin{document}
\twocolumn[
\mytitle{Learning with Hidden Factorial Structure}
\myauthor{
Charles Arnal$^*$ \And Clement Berenfeld$^*$ \And Simon Rosenberg$^*$ \And  Vivien  Cabannes$^*$}
\myaddress{Meta AI, FAIR \And Postdam University \And C3AI \And Meta AI, FAIR} ]

\def\thefootnote{*}\footnotetext{The authors contributed equally to this work.}\def\thefootnote{\arabic{footnote}}

\begin{abstract}

Statistical learning in high-dimensional spaces is challenging  without a strong underlying data structure. 
Recent advances with foundational models suggest that text and image data contain such hidden structures, which help mitigate the curse of dimensionality. 
Inspired by results from nonparametric statistics, we hypothesize that this phenomenon can be partially explained in terms of decomposition of complex tasks into simpler subtasks.
In this paper, we present a controlled experimental framework to test whether neural networks can indeed exploit such ``hidden factorial structures.''
We find that they do leverage these latent patterns to learn discrete distributions more efficiently.
We also study the interplay between our structural assumptions and the models' capacity for generalization.

\end{abstract}

\section{Introduction}

\paragraph{Context and motivations}
The ability of artificial intelligence systems to solve highly complex tasks by extracting information from a vast amount of data remains poorly understood.
On the one hand, the \emph{curse of dimensionality} states that learning becomes virtually impossible as the data dimension grows large. On the other hand, it has been now long observed that these systems can develop a fine understanding of even high-dimensional data. This is for instance the case for Large Language Models (LLMs), which process lists of tokens whose different combinations can easily exceed $10^{80}$ in number \citep{brown2020language}.
It is well-known that the curse of dimensionality can be overcome when the data exhibits a lower-dimensional underlying structure, and if the learning procedure can adapt to this intrinsic dimension.
Many structural assumptions have been proposed by statisticians, such as the existence of a low-dimensional manifold on which the data lies \citep{fefferman2013testing}, smoothness of the functional to learn \citep{caponnetto2007optimal} or sparsity of the later \citep{hristache2001structure}.
A recent line of research has argued that the factorization of a complex task into simpler sub-tasks could better explain the ability to learn such a complex task efficiently \citep{parascandolo2018learning,arora2023theory,ahuja2024provable,Cagnetta2024,CompositionalRisk}.
We approach this work from this perspective.

\paragraph{Contributions}
We adopt a discrete data framework, which is a reasonable choice given the two following considerations: first, the frequently discrete nature of data (in particular text data); and second, the operational basis of LLMs, which rely on tokenized inputs and are of particular interest in contemporary research, and one of the main motivations behind this work.
Within this framework, we propose a factorization-based model where both the input and output spaces are decomposed into a products of small unknown factors, and where the tasks are structured along this factorization. This data structure is motivated by examples detailed further below. Our goal is to {\em test whether neural networks can leverage hidden factorial structure to learn more efficiently}, both in terms of computational and statistical complexity. 
%
Due to the lack of theoretical tools at hand, we focus on an empirical approach through a controlled exploratory study that connects the data structure and the performance of Multilayer Perceptrons (MLPs). We choose to study MLPs as they are the essential building blocks of many modern machine learning techniques, and in particular of LLMs, where they represent the majority of the model's parameters. 
By extensively studying how the performances of our model react to changes in train data size, number of parameters, compute resources, and complexity of the hidden data structure, we hope to shed new light on some of the core principles behind learning in the LLM era.
To summarize our contributions:
\begin{enumerate}[leftmargin=1em]
    \item We propose a new type of structural assumptions on discrete data distributions; 
    \item We provide theoretical insights into the impact of these assumptions on the difficulty of learning the distributions; 
    \item We provide experimental proofs that Neural Networks can leverage our hypotheses to learn more efficiently. 
\end{enumerate}

\paragraph{Related works}
Our assumptions are inspired both by classical structural approaches from nonparametric statistics \citep{hristache2001structure,fefferman2013testing} and by recent works in machine learning suggesting the importance of compositionality and sub-task decomposition to understand and improve learning \citep[e.g.,][]{dziri2023faithfatelimitstransformers,alabdulkareem2024securellmusingcompositionalitybuild,valvoda2023benchmarkingcompositionalityformallanguages,wang2024grokkedtransformersimplicitreasoners,wies2022sub,zhang2024memory,guo2024causal, CompositionalRisk}, as well as hidden sparsity \citep[e.g.,][]{Barak2022HiddenPI,marion2024attentionlayersprovablysolve}. 

The adaptivity of neural networks to sparsity assumptions is an active area of research, with existing results holding only under restrictive conditions \citep[simple optimizers, vanilla architectures, or asymptotic regimes, see e.g.,][]{arnaboldi2024repetita, lee2024neural}. The main theoretical results in this area are linked to either multi-index models \citep[e.g.,][]{bietti2023learning, damian2024computational} or the sparse parity problem  \cite{daniely2020learning}. These frameworks are quite different from the settings typically discussed by practitioners training large LLMs, who often describe LLMs as processing discrete information --- which motivates our discrete setting.

\section{Setting}\label{sec:setting}

In this paper, we make Neural Networks (NNs) learn discrete conditional distributions $p(y|x)$ and we analyze both theoretically and experimentally how specific structural assumptions about these distributions can make the task easier.

More precisely, we consider a set of inputs $\cX$ of cardinality $N \in \N$ and a set of outputs $\cY$ of cardinality $M\in \N$.
We assume that the input/output pairs are generated from a joint distribution $(X,Y) \sim p$ on  $\cX\times\cY$, and we task a NN with learning the conditional distribution $p(y|x)$ for each input $x\in\cX$.
The quality of a learned estimator $\widehat p$ is measured through the Kullback-Leibler divergence, defined as
\begin{equation}\label{eq:loss}
    {\cal L}(\widehat p) = \E_{(X,Y)\sim p}\left[-\log \frac{\widehat p(Y | X)}{p(Y|X)}\right].
\end{equation}
From an optimization point of view, this loss is equivalent to the cross-entropy loss used by practitioners.
Indeed, the loss ${\cal L}$ is the excess risk of the cross-entropy loss, i.e. the cross-entropy loss minus its minimizer.

In order to learn from discrete data, we embed our problem in a continuous space $\R^d$ with two (learnable) embeddings:
\begin{equation}
 \label{eq:io}
    e: \cX \to \R^d, \qquad u:\cY \to \R^d.
\end{equation}
In addition to the embeddings, the NN learns a transformation $F:\R^d \to \R^d$ of the embedding space.
The final estimator is parameterized through the softmax function
\begin{equation}\label{eq:functionalform}
   \widehat p(y|x) = \frac{\exp(u_y^\top F(e_x))}{\sum_{y'\in\cY}\exp(u_{y'}^\top F(e_x))}, 
\end{equation}
where we use the abbreviations $e_x := e(x)$ and $u_y := u(y)$.
This setting, and in particular the fact that $e_x$ and $u_y$ belong to the same embedding space, was designed to closely match current state-of-the-art architectures, where residual connections are commonly used \citep{he2015deep}. 

Among all possible conditional density functions $p(y|x)$, we are particularly interested in those that satisfy certain structural conditions, which we define and motivate thereafter. Our goal is to establish whether NNs can efficiently leverage those assumptions to better learn and represent $p(y|x)$ in two slightly different settings: one in which the embeddings $e_x$ are learned, and one in which fixed, factorization-compatible embeddings are used.

\subsection{The Factorization Hypothesis}\label{subsec:factorization}

Our factorization hypothesis assumes that both the input space $\cX$ and the output space $\cY$ can be decomposed into $k\in\N$ and $\ell\in\N$ {\em unknown factors} respectively.
In other terms,
\[
    \cX \cong \prod\nolimits_{i \in [k]} \cX_i \quad \text{and} \quad \cY \cong \prod\nolimits_{j \in [\ell]} \cY_j,
\]
where each $\cX_i$ and $\cY_i$ is a discrete space containing $p_i = \card{\cX_i}$, respectively $q_j = \card{\cY_j}$, elements.
By ``unknown'', we mean that the mappings $\cX \cong \prod \cX_i $ and $\cY \cong \prod \cY_j$ are not given to the agent (the NN) trying to learn the conditional distribution $p(y|x)$.
We further assume that for all index $j \in [\ell]$, there exists a subset $I_j \subseteq [k]$ such that the conditional probability distribution $p(y|x)$ factors into
\begin{equation} \label{eq:prod} \tag{FH}
p(y|x) =  \prod\nolimits_{j \in [\ell]} p(y_j | \pa_j),
\end{equation}
where
\[
    \pa_j = (x_i)_{i\in I_j} \in \prod\nolimits_{i\in I_j} \cX_i.
\]
Here, $x_i\in \cY_j$ denotes the $i$-th coordinate of $x$ in its factor decomposition (respectively $y_j\in \cY_j$ is the $j$-th coordinate of $y$), hence $\pa_j$ is the set of coordinates that influence the factor $y_j$.
In other words, the Factorization Hypothesis~\eqref{eq:prod} states that the coordinates $y_j$ of $y$ in the decomposition are independent given the input $x$, and that each $y_j$ only depends on the subset $\pa_j$ of the input coordinates.
This model allows for structures where only a (potentially small) subset of hidden characteristics of the input influences each feature of the output.
As a consequence, learning the task $p(y|x)$ can be done by learning the subtasks $p(y_j | \pa_j)$.

The notation $\pa_j$ stands for ``parents'' and is borrowed from the field of graphical probabilistic model \citep{jordan2004}, as our problem can naturally be represented by a bipartite directed acyclic graph, where a first set of $k$ vertices represents the factors of $x$ and a second set of $l$ vertices represents those of $y$ and an edge between $x_i$ and $y_j$ indicates a non-trivial causal relationship between the two random variables, see Figure~\ref{fig:dag}.

\begin{figure}[h!] 
\centering
\includegraphics[scale = .8]{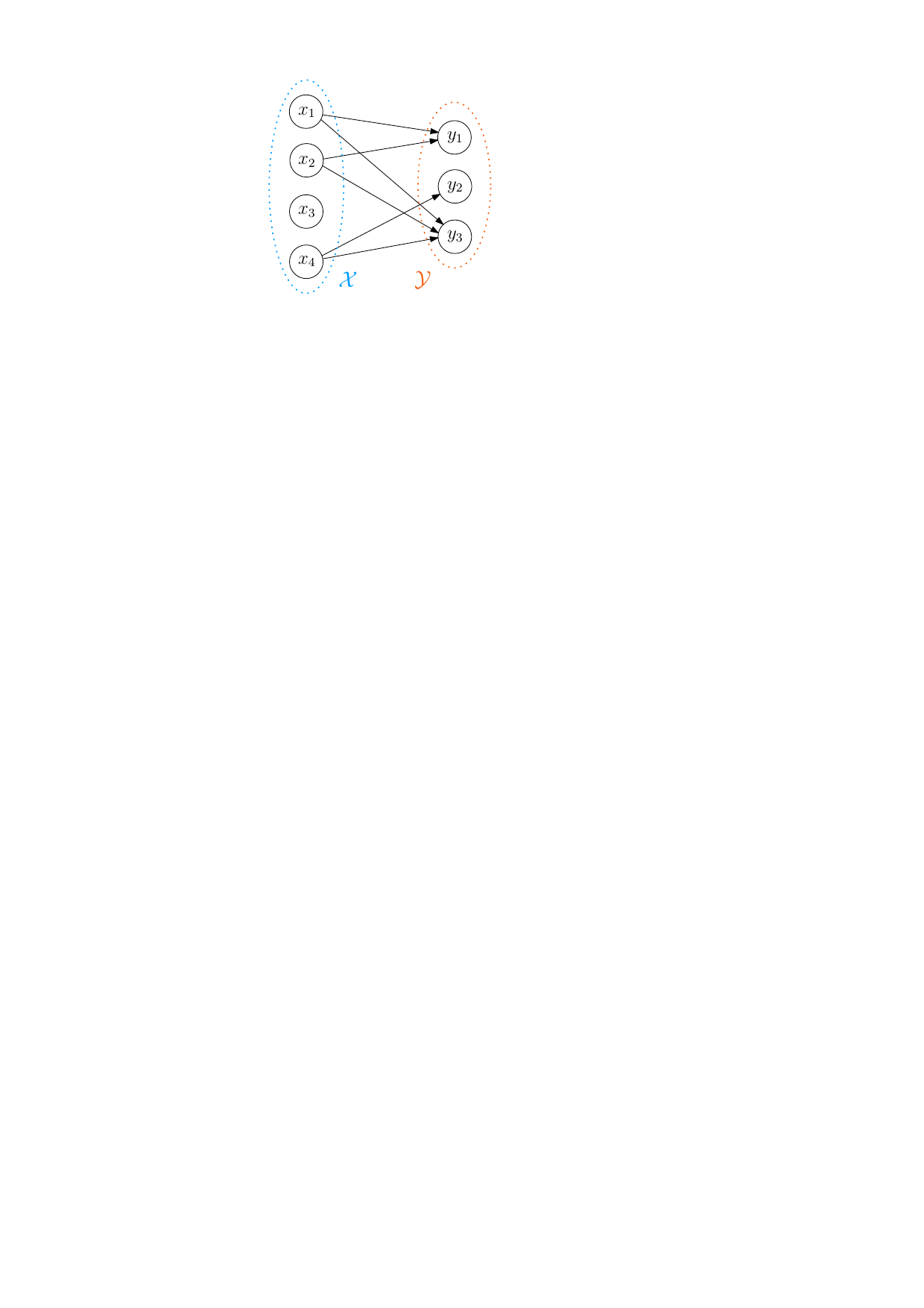}
\caption{A graphical representation of a model for $k = 4$ and $\ell = 3$. In this example, $\pa_1 = (x_1,x_2)$, $\pa_2 = (x_4)$, $\pa_3 = (x_1,x_2,x_4)$, and the third hidden variable $x_3$ has no impact on the output $y$.}
\label{fig:dag}
\end{figure}

\begin{remark}[Compression and representation]~
    Each layer of an MLP tasked with approximating a map typically performs a mix of representation, i.e. applies some information-preserving transformation to its input to make it more suited to later computations, and compression, i.e. processes the information in a many-to-one fashion.
In our setting, factorizing $\cX$ corresponds to  representation and mapping (probabilistically) the factors of $\cX$ to those of $\cY$ to compression.
\end{remark}

The following examples provide motivations for the factorization hypothesis.

\begin{example}[Text data]
Text data implicitly factors into characteristics of small cardinality, such as e.g. language, grammatical function, length, sentiment, complexity, etc. With most tasks, each of the characteristics of the expected output only depends on a subset of those of the input: in a next-word prediction setting, for example, the language of the next word would be the same as that of the input, its grammatical function would depend on the grammatical functions and language of the previous words, etc.
\end{example}

\begin{example}[Recommender systems] 
Likewise, consider a recommender system tasked with matching users with targets (e.g. movies on a streaming platform, products on a marketplace, etc.). One expects that the relevant data for both users and targets factor in such a way that only a small number of (potentially unknown) characteristics of a given user (e.g. nationality, language, socioeconomic status, etc.) impacts each factor of the recommended target (e.g. language, movie genre, etc.).
In this context, our setting is to be compared with low-rank matrix completion approaches \citep{candes2010matrix} where users and target are implicitly assumed to be embedded in a low-dimensional space (each dimension corresponding to a factor) and that preferences are measured through inner products (i.e. each factor of the target is only affected by one of the user's factors).
\end{example}

\begin{remark}[Link to mixture estimation]~
The law of the variable $y$ as defined above can be written as 
    $
    p(y) = \sum
    \alpha_\bz \prod
    p(y_j | \pa_j = \bz_j)
    $
    where $\alpha_\bz = \bP(\pa_1 = \bz_1,\dots,\pa_\ell = \bz_\ell)$. Thus learning $p(y|x)$ involves in particular learning the components of a mixture of products of distributions over a discrete space \citep{feldman2008learning}, with the added difficulty that one does not know a priori the product decomposition on $\cY$.
\end{remark}

\subsection{Input Embeddings}
\label{subsec:embeddings}

This subsection discusses two possible embeddings of our discrete problem in a continuous space, which are then used as inputs to the transform  $F$: a learned encoding, and a fixed factorization-compatible embedding. Both are defined further below.

\paragraph{Learned embedding}
In this classic setting, used e.g. in most LLMs, \citep[see ][]{brown2020language,touvron2023llama}, each discrete element $x\in \cX$ is mapped to a learned vector $e_x \in \R^d$.
The embeddings are typically initialized as random Gaussian vectors, and learned during training jointly with the embedding transform $F$ and the output embedding $u$.
Though very natural, this setting lacks a crucial property: as no structure is enforced on the embeddings, a trained NN has no hope of correctly generalizing to an input $x$ that was not part of its training set, and whose embedding would simply be equal to its random initialization.
This stands in contrast to the factorization-compatible embeddings described below.

\paragraph{Factorization-compatible embedding}
In this setting, we assume that an oracle or a previously trained NN provides us with an embedding $\tilde e_x$ that is adapted to the (unknown) factorization of $\cX$ in the following sense:
\begin{equation}
    \label{eq:fce}
    \tilde e_x = [E_1 \hdots E_k] 
    \begin{bmatrix}
    \ind{x_1} \\
    \vdots \\
    \ind{x_k}
    \end{bmatrix} = \sum_{i\in[k]} e^i(x_i),
\end{equation}
where $E_i \in \R^{d\times p_i}$ is some fixed matrix,  $\ind{x_i} \in \R^{p_i}$ is the one-hot encoding of $x_i \in \X_i $ and  $e^i(x_i) = E_i \ind{x_i}  \in \R^d$.
In other words, each discrete element $x_i \in \cX_i$ is mapped to some fixed, arbitrary embedding $e^i(x_i)$. 
The total embedding $\tilde e_x$ of $x$ is then the sum of the embeddings of its factors.
See Figure~\ref{fig:emb} for an illustration.

\begin{figure}[t]
    \centering
    \includegraphics[width=0.6\linewidth, page = 2]{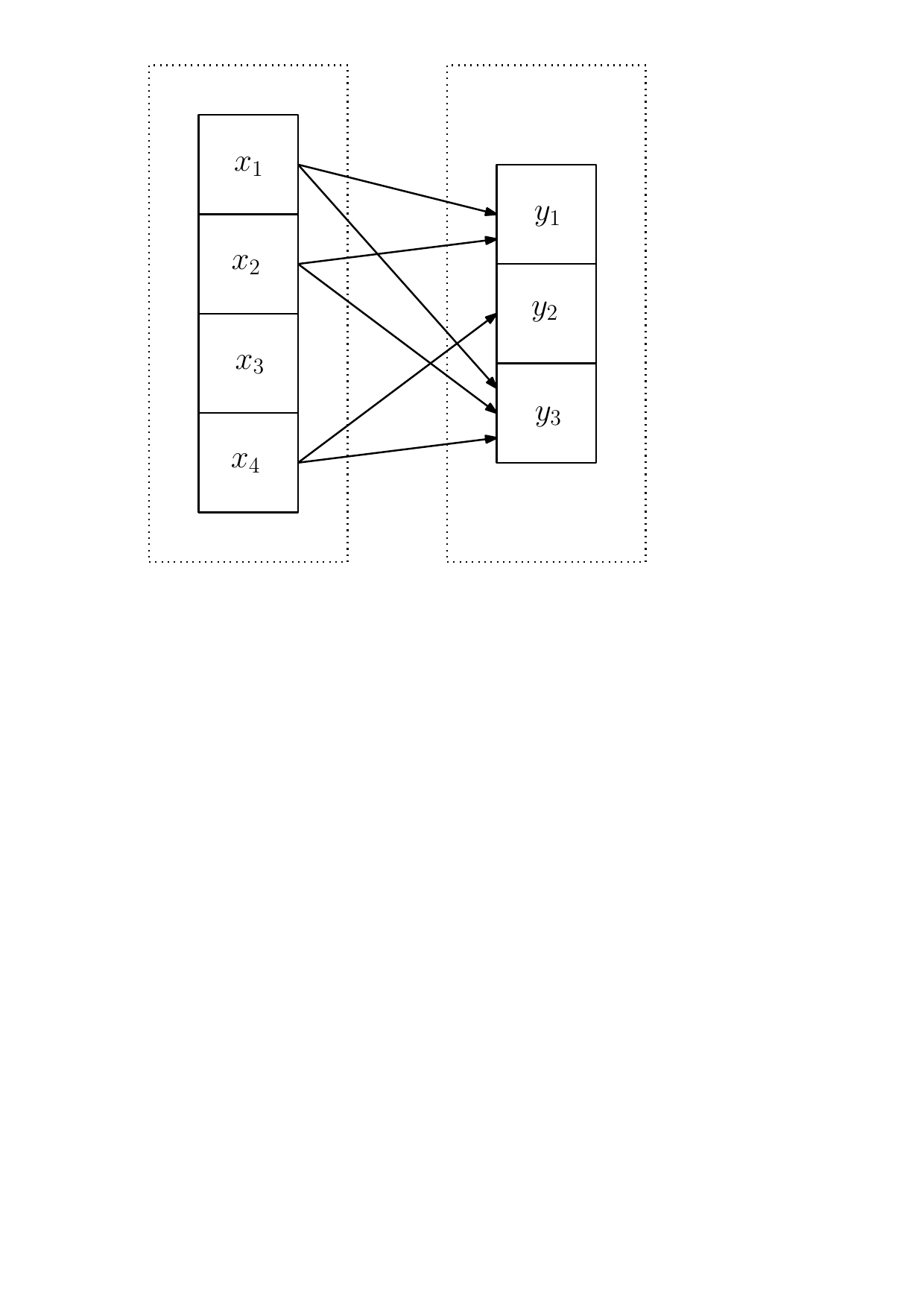}
    \caption{A diagram of a factorization-compatible embedding $x \mapsto \tilde{e}_x$.}
    \label{fig:emb}
\end{figure}

Note that when $d\geq \sum_{i\in [k]} p_i$, any matrix $[E_1 \hdots E_k]$ with coefficients sampled from continuous distributions will almost surely be of rank $\sum_{i\in [k]} p_i$.
As a consequence, the embedding $\tilde e$ will be linearly invertible, in the sense that learning a simple linear transformation would enable the NN to recover the factors $(x_1,\ldots, x_k)$ from its input $e_x$.
This transformation could be learned from $\sum p_i$ examples, which can be much smaller than the cardinality of the full set $\cX$ of $\prod p_i$ elements (as small as $\log N$).
Note that, thanks to its nonlinearities, a NN may retrieve these factors even when $d < \sum p_i$.

In contrast to learned embeddings, factorization-compatible embeddings allow for generalization across inputs.
Indeed, consider the simple case where $\cX \cong \cX_1 \times \cX_2$, $\cY \cong \cY_1 \times \cY_2$ and $p(y|x) = p(y_1|x_1)p(y_2|x_2)$.
Assume that the NN has learned to invert the embedding $\tilde e$, and can recover the factors $(x_1,x_2)$ of $x$  given an input $\tilde e_x$. 
Then the conditional distribution $p(y|x)$ for some yet unseen input $x = (x_1,x_2)$ can be estimated from past observations as long as points of the shape $(x_1,x_2')$ and $(x_1',x_2)$, for some $x_1' \in \cX_1, x_2'\in \cX_2$, were seen before --since $p(y_1|x_1)$ and $p(y_2|x_2)$ can be computed from $(x_1,x_2')$ and $(x_1',x_2)$ respectively.
Likewise, given a more complicated factorization, the conditional distribution $p(y|x)$ can be estimated for a yet unseen $x$ as long as each of the parent coordinates $\pa_j = (x_i)_{i\in I_j}$ have already appeared in previously observed data points.

The examples below provide motivations for this setting.

\begin{remark}[Link to transformers]
The functional form of the factorization-compatible embeddings closely matches that of the outputs of the attention layers in transformers, which are produced as a weighted sum of value vectors. Those outputs are then used as inputs to MLPs.
\end{remark}

\begin{remark}[Link to other statistical problems]~
When the images of the maps $(e^1,\dots,e^k)$ are linearly free, there exists some (unknown) linear mapping $(T_1,\dots,T_k)$ such that $T_i(e_x) = e^i(x_i)$ for all $i \in [k]$. When furthermore $e^i$ is injective over $\cX_i$, the target distribution can be expressed as
\[
    p(y|x) = \prod\nolimits_{j \in [\ell]} p(y_j|\pa_j)
= \prod\nolimits_{j \in [\ell]} p(y_j| \mathrm{Pa}_j(e_x)),
\]
where
\[
  \mathrm{Pa}_j: z \in \R^d \mapsto (T_i(z))_{i \in I_j} 
\]
is a linear map (the Parent map). 
If we now assume that $\ell = 1$ and that we are given an embedding $u : \cY \to \bR^{d}$, the model takes the simpler form
$
p(u|e) = p(u|\Pa(e)).
$
When the distribution $p(y|x)$ is close to a Dirac distribution, i.e. when the assignment $x \mapsto y$ can be seen as a deterministic function with some added noise, the situation can be reframed as a regression problem
$u = f(\Pa(e)) + \mathrm{noise}$,
which coincides with the well-studied multi-index problem \citep{hristache2001structure}. In particular, it has been shown that even with $\Pa$ unknown, the rate of estimation of the regression function depends on the rank of $\Pa$ and not on the embedding dimension of $e$. 
Closely related settings involving hidden sparsity include \citep{marion2024attentionlayersprovablysolve} and \citep{Barak2022HiddenPI}.
\end{remark}

\section{Theoretical Analysis}\label{sec:theory}

We consider both the computational and statistical difficulty of learning a factorizable data distribution  --that is to say both the minimal theoretical number of parameters needed in an MLP and the minimal theoretical number of training points needed for this task. For a given probability distribution $p$ on $\cX \times \cY$, and a given factorization $\cF$ of $\cX$ and $\cY$, we let 
$$
p_\cF(x,y) := p(x) \prod_{i=1}^\ell p(y_j|\pa_j).
$$
With this notation, the Factorization Hypothesis \eqref{eq:prod} is equivalent to stating that $p = p_\cF$ for some factorization $\cF$. We let $\cG(p)$ be the set of all such factorizations (which can be limited to the trivial factorization in the worst case).

\subsection{Approximation Complexity} \label{sec:approx}

We aim to assess the impact of our factorization hypothesis on the computational complexity involved in approximating the conditional distributions $p(y|x)$ for our models. We let $\cF$ be a factorization compatible with $p$.

Consider first  the learned embeddings defined in the previous subsection.
Let $E \in \R^{d\times N}$ and $U \in \R^{d\times M}$ be the learned embedding matrices that map the one-hot encoding of $x$ to $e_x$, respectively the one-hot encoding of $y$ to $u_y$, and define the matrix $G:= F(E) \in \R^{d\times N}$, where the transform $F$ is applied column-wise (i.e. to each $e_x$). 
Note that as each token $x$ can be mapped to any vector $e_x \in \R^d$ by the embedding process, the transform $F$ adds nothing to the expressivity of the overall model (though it might impact the training dynamics).
Constrained by the functional form of our estimator stated in~\eqref{eq:functionalform}, we want to represent $p(y|x)$ as proportional to $\exp(u_y^\top F(e_x))$.
This is equivalent to representing the matrix $\Lambda := (\log p(y|x))_{y,x} \in \R^{M\times N}$ as a matrix product 
\begin{equation}\label{eq:matrixEq}
   \Lambda = U^\top G + \text{column-wise constant},
\end{equation}
where the column-wise constant accounts for the renormalizing factor from~\eqref{eq:functionalform}.
\begin{theorem} \label{thm:ac}
    The log-likelihood matrix $\Lambda$ of the conditional probability distribution $p(y|x)$ can be factorized as $\Lambda = U^\top G$, where $U \in \bR^{\bar\chi(p) \times M}$, $G \in \bR^{\bar\chi(p) \times N}$ with
\begin{equation}
    \label{eq:AC}
    \tag{AC}
    \bar\chi(p) := \min_{\cF \in \cG(p)} \sum_{j\in [\ell]}\min \{ |\pa_j|,  q_j\},
\end{equation}
and
\[
    |\pa_j| := \prod_{i\in I_j} |\cX_i|.
\]
\end{theorem}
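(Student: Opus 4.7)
\emph{Proof strategy.} The plan is to leverage the factorization hypothesis to decompose $\log p(y|x)$ into a sum of $\ell$ low-rank terms, one per output factor, and then assemble these pieces by row-stacking into a single factorization $\Lambda = U^\top G$ with the advertised inner dimension. First I would fix any $\cF \in \cG(p)$ and, taking logarithms in $p(y|x) = \prod_{j} p(y_j | \pa_j)$, write
\[
    \Lambda = \sum_{j \in [\ell]} \Lambda_j, \qquad \Lambda_j(y,x) := \log p(y_j | \pa_j(x)) \in \R^{M\times N}.
\]

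The key step is to bound the rank of each summand $\Lambda_j$. Since $\Lambda_j(y,x)$ depends on $y$ only through the coordinate $y_j \in \cY_j$ (which takes $q_j$ values) and on $x$ only through $\pa_j(x) \in \prod_{i \in I_j} \cX_i$ (which takes $|\pa_j|$ values), two natural exact rank decompositions are available: ``factoring through $y_j$'' via $\Lambda_j(y,x) = \sum_{w \in \cY_j} \mathbf{1}_{y_j = w}\, \log p(w | \pa_j(x))$, giving inner dimension $q_j$; or ``factoring through $\pa_j$'' via $\Lambda_j(y,x) = \sum_{z}\log p(y_j | z)\, \mathbf{1}_{\pa_j(x) = z}$, giving inner dimension $|\pa_j|$. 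For each $j$ independently, I would pick whichever is cheaper and obtain a factorization $\Lambda_j = U_j^\top G_j$ with inner dimension $r_j := \min\{|\pa_j|, q_j\}$.

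To conclude, I would vertically concatenate: letting $U$ be the stack of $U_1, \ldots, U_\ell$ and $G$ the stack of $G_1, \ldots, G_\ell$, both have $\sum_j r_j$ rows and $U^\top G = \sum_{j} U_j^\top G_j = \Lambda$. Choosing $\cF$ to be a minimizer in~\eqref{eq:AC} gives inner dimension $\bar\chi(p)$. There is no real obstacle here: the argument is elementary matrix-rank bookkeeping that converts the Factorization Hypothesis directly into a matrix factorization. The only mildly subtle point is that the choice between the two rank bounds is made \emph{per index} $j$, which is precisely why the minimum appears inside the sum defining $\bar\chi(p)$ --- a cruder argument factoring all $\Lambda_j$ in the same way would give the strictly weaker bound $\min\{\sum_j |\pa_j|,\, \sum_j q_j\}$.
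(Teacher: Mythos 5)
Your proof is correct and follows essentially the same route as the paper's: the paper likewise exhibits the two ``primal/dual'' solutions (one-hot in $y_j$ with inner dimension $q_j$ versus one-hot in $\pa_j$ with inner dimension $|\pa_j|$), selects the cheaper one per index $j$, and stacks the blocks to reach inner dimension $\sum_j \min\{|\pa_j|, q_j\}$ before minimizing over $\cF \in \cG(p)$.
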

See the Supplementary Materials for the proof of this result.
It shows that in the learned embedding setting, the distribution can be perfectly approximated using an embedding dimension $d = \bar\chi(p)$ and $(M + N)\cdot d $ parameters, and that the factorization hypothesis may enable an {\em exponential gain} regarding the computational difficulty of the task at hand, going from $d = \min(M, N)$ when no structure is assumed on the data, to $d$ potentially as small as $\min\{\log_2M, \log_2N\}$.

Note that while the quantity $\bar \chi(p)$ bounds the rank of $\Lambda$, it is easy to show that $\Lambda$ having a small rank does not imply the existence of a factorization.  Simple counter-examples are provided in the Supplementary Materials.

Though the situation is slightly more complex than in the case of learned embeddings, similar results hold in the case of factorization-compatible embeddings: the number of parameters and the embedding dimension needed to approximate the conditional distribution can again be bound in terms of factorization characteristics (we discuss this in the Supplementary Materials).

\subsection{Statistical Complexity}

The number of samples needed to learn a given distribution $p$ on $\cX \times \cY$ (for the total variation distance $\tv$) without any structural assumption is proportional to $|\cX| \times |\cY|$. More precisely, one can show:
\begin{theorem}[see e.g. \cite{canonne2020short}] \label{thm:tv}
    There exists an estimator $\wh p$ and a constant $C > 0$ such that, for all  $\ve,\delta >0$,
$$
\sup_{p} p^{\otimes n}( \tv(\wh p,p) \leq \ve) \geq 1 - \delta
$$
whenever
\begin{equation}
n \geq C \frac{|\cX||\cY| + \log(1/\delta)}{\ve^2}. \label{eq:sc1}
\end{equation}
\end{theorem}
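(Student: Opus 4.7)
The plan is to use the empirical distribution as the estimator and combine a standard expectation bound with McDiarmid concentration. Define $\widehat p(z) = \frac{1}{n}\sum_{i=1}^{n} \mathbf{1}\{Z_i = z\}$ where $Z_1,\dots,Z_n$ are the i.i.d. samples on $\Omega = \cX \times \cY$, and let $K = |\cX||\cY|$.

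First, I would bound the expected total variation distance. Using $\tv(\widehat p, p) = \frac{1}{2}\sum_{z} |\widehat p(z) - p(z)|$, Jensen's inequality gives $\E|\widehat p(z) - p(z)| \leq \sqrt{\Var(\widehat p(z))} = \sqrt{p(z)(1-p(z))/n}$. Summing and applying Cauchy--Schwarz,
\[
\E[\tv(\widehat p, p)] \leq \frac{1}{2}\sum_{z} \sqrt{p(z)/n} \leq \frac{1}{2}\sqrt{K/n}.
\]

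Second, I would obtain a high-probability refinement via McDiarmid's bounded differences inequality. The map $(z_1,\dots,z_n) \mapsto \tv(\widehat p, p)$ has bounded differences: replacing a single $z_i$ changes $\widehat p$ in at most two coordinates, each by $1/n$, so $\tv$ changes by at most $1/n$. McDiarmid therefore yields
\[
\Pr\bigl[\tv(\widehat p, p) \geq \E[\tv(\widehat p, p)] + t\bigr] \leq \exp(-2nt^2).
\]
Choosing $t = \sqrt{\log(1/\delta)/(2n)}$ and combining with the expectation bound gives $\tv(\widehat p, p) \leq \frac{1}{2}\sqrt{K/n} + \sqrt{\log(1/\delta)/(2n)}$ with probability at least $1-\delta$, uniformly in $p$. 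Requiring this to be at most $\varepsilon$ yields the sample complexity $n \geq C(K + \log(1/\delta))/\varepsilon^2$ for a suitable constant $C$.

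There is no real obstacle here since the result is classical; the only subtle point is ensuring the bound is uniform over $p$, which follows immediately from the fact that both the Cauchy--Schwarz step and McDiarmid's concentration are distribution-free (the bounded differences constant $1/n$ does not depend on $p$). If one wanted a sharper constant or to match known minimax rates, one could instead invoke a Bretagnolle--Huber-type analysis or the tight expectation bound $\E[\tv(\widehat p, p)] \leq \sqrt{(K-1)/(4n)}$, but for the stated result the route above is the most direct.
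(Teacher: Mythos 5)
Your proof is correct and follows exactly the route the paper intends: the paper does not prove Theorem~\ref{thm:tv} itself but cites \cite{canonne2020short} and identifies the histogram (empirical) estimator, and your argument --- the Cauchy--Schwarz bound $\E[\tv(\wh p,p)]\le\frac12\sqrt{|\cX||\cY|/n}$ plus McDiarmid with bounded-difference constant $1/n$ --- is precisely the standard derivation given in that reference. No gaps; the distribution-free nature of both steps gives the required uniformity over $p$.
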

An estimator satisfying the above property is simply the histogram estimator.
Because the (hidden) factorization assumption transforms the learning task into $\ell$ independent (but unknown) learning tasks, one can intuit that the quantity $|\cX||\cY|$ in \eqref{eq:sc1} will be replaced by a much smaller, factored counterpart. To make this intuition more formal, we introduce the following complexity
\begin{equation}
    \label{eq:SC}
    \tag{SC}
    \chi(p) = \min_{\cF \in \cG(p)} \ell^2 \left\{ \log \ell  + \max_{j \in [\ell]} |\pa_j| \times  q_j \right\},
\end{equation}
where $\cG(p)$ is as defined at the start of the section. For $\omega \in (0,1)$, we let $\Sigma(\omega)$ be the set of all probability distributions $p$ on $\cX \times \cY$, such that, 
\begin{enumerate}
    \item[({\em i})] The marginal of $p$ on $\cX$ is the uniform distribution;
    \item[({\em ii})] For any factorization $\cF$, either $\cF$ belongs to $\cG(p)$, or $\tv(p_\cF,p) > \omega$. 
\end{enumerate}
Assumption ({\em i}) can be replaced by ``the marginal is lower-bounded by a positive constant'', though it makes the proof slightly more involved. Assumption ({\em ii}) is a usual assumption from the conditional independence testing literature, see e.g. \cite{canonne2018testing} or \cite{neykov2021minimax}.

\begin{theorem} \label{thm:sc} There exists a constant $C >0$ and an estimator $\wh p$ such that for all $\omega, \ve, \delta$ with 
$\ve < \omega/(\log_2(|\cX||\cY|)|\cX||\cY|)^{1/2}$, there holds, for any $p \in \Sigma(\omega)$
$$
p^{\otimes n}( \tv(\wh p,p) \leq \ve) \geq 1 - \delta
$$
as soon as
$$
n \geq C \frac{\chi(p) + \log(4/\delta)}{\ve^2}. \label{eq:sc2}
$$
\end{theorem}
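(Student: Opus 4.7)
The plan is to enumerate candidate factorizations of $\cX$ and $\cY$, form a plug-in (histogram) estimator for each, and then aggregate them through a Scheff\'e-style minimum-distance tournament. For a candidate factorization $\cF = (\{\cX_i\}_{i\in[k]}, \{\cY_j\}_{j\in[\ell]}, \{I_j\}_{j\in[\ell]})$, set
$$
\wh p_\cF(x,y) := \wh p(x) \prod_{j=1}^{\ell} \wh p(y_j \mid \pa_j),
$$
where $\wh p$ denotes the empirical marginals on the relevant coordinates.

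The first step is to control $\tv(\wh p_{\cF^*}, p)$ for an optimal $\cF^* \in \cG(p)$ achieving the minimum in $\chi(p)$. By subadditivity of total variation across a product of conditional distributions,
$$
\tv(\wh p_{\cF^*}, p) \leq \tv(\wh p_{\cX}, p_{\cX}) + \sum_{j=1}^{\ell} \E_{\pa_j \sim p}\bigl[\tv(\wh p(\cdot \mid \pa_j), p(\cdot \mid \pa_j))\bigr].
$$
Each conditional term can be controlled by Theorem~\ref{thm:tv} applied slice by slice: assumption~({\em i}) on $\Sigma(\omega)$ guarantees that each slice receives at least a constant fraction of $n / |\pa_j|$ samples with high probability, so the empirical estimator reaches tolerance $\ve/(2\ell)$ as soon as $n \geq C \ell^2 (|\pa_j| q_j + \log(\ell/\delta))/\ve^2$. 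Union-bounding over $j \in [\ell]$ and invoking the minimizer in $\chi(p)$ yields $\tv(\wh p_{\cF^*}, p) \leq \ve/2$ once $n$ exceeds $C'(\chi(p) + \log(1/\delta))/\ve^2$.

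The second step is to pick an estimator without a priori knowledge of $\cF^*$. I would run a minimum-distance (Scheff\'e) tournament over the finite family $\mathrm{Fact}$ of candidate factorizations: its output $\wh p$ satisfies $\tv(\wh p, p) \leq 3 \min_\cF \tv(\wh p_\cF, p)$ provided $n \geq C''\log(|\mathrm{Fact}|/\delta)/\ve^2$. By assumption~({\em ii}), every spurious $\cF \notin \cG(p)$ produces $p_\cF$ with $\tv(p_\cF, p) > \omega$, and the hypothesis $\ve < \omega / (\log_2(|\cX||\cY|)|\cX||\cY|)^{1/2}$ makes $\omega \gg \ve$: spurious candidates are thus eliminated with high probability, and the surviving winner stays within $\ve$ of $p$.

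The main obstacle I anticipate is the log-cardinality of $\mathrm{Fact}$: enumerating all factorizations of $\cX$ and $\cY$ with their parent assignments yields a count combinatorial in $|\cX|, |\cY|$, much larger than $\ell^2 \log \ell$. The key will be to group factorizations producing identical $p_\cF$ into equivalence classes, and then to exploit the sizeable gap $\omega \gg \ve \sqrt{|\cX||\cY|\log_2(|\cX||\cY|)}$ to absorb the residual log-cardinality of these classes into the $\ell^2\log\ell$ term of $\chi(p)$. Working out this accounting --- which is precisely what dictates the somewhat unusual scale condition on $\ve$ in the theorem --- is the crux of the argument.
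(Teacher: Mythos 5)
Your first step (per-factorization plug-in estimation) is essentially the paper's, up to two details. The paper estimates the joint marginals $p_j$ on $\prod_{i\in I_j}\cX_i\times\cY_j$ and divides by the empirical $\wh q_j$, rather than arguing slice by slice (same complexity, but no need for a high-probability claim that every slice receives its fair share of samples). More importantly, the paper plugs in the \emph{known} uniform marginal $1/|\cX|$ on $\cX$ rather than an empirical one: your term $\tv(\wh p_{\cX}, p_{\cX})$ forces $n$ of order $|\cX|/\ve^2$, which is \emph{not} controlled by $\chi(p)/\ve^2$ when the factorization is fine (e.g.\ all $|\pa_j|\ll|\cX|$), so you must use assumption~({\em i}) there. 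This is an easy fix.

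The genuine gap is the aggregation step. A Scheff\'e tournament resolving at accuracy $\ve$ needs $n \geq C\log|\mathrm{Fact}|/\ve^2$, and $\log|\mathrm{Fact}|$ is of order $|\cX||\cY|\log_2(|\cX||\cY|)$ --- that is the unstructured rate, which defeats the theorem. The repair you sketch cannot work: grouping factorizations into equivalence classes does not reduce the count below combinatorial in the worst case, while $\ell^2\log\ell \leq (\log_2|\cY|)^2\log\log_2|\cY|$ is only polylogarithmic and cannot absorb anything of that order. The paper's resolution is a \emph{two-scale} argument that your plan is missing. The union bound over all $\aleph$ factorizations is paid only at the coarse resolution $\omega/4$, where it is used --- together with assumption~({\em ii}) and a comparison of each $\wh p_\cF$ to the trivial-factorization histogram $\wh p_\cO$ --- to decide which $\cF$ belong to $\cG(p)$; this costs $n \geq C(|\cX||\cY|\log_2(|\cX||\cY|)+\log(\aleph/\delta))/\omega^2$, which the scale condition $\ve^2 < \omega^2/(|\cX||\cY|\log_2(|\cX||\cY|))$ makes dominated by $\chi(p)/\ve^2$. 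Then, among the validated factorizations, the paper selects the one of \emph{minimal complexity} $\ell^2(\log\ell+\max_j|\pa_j|q_j)$, not the one of minimal estimated distance, and the final $\ve$-accuracy comes from the fine-scale guarantee for that single low-complexity candidate at cost $\chi(p)/\ve^2$. A minimum-distance winner, by contrast, is only guaranteed to be within a constant multiple of $\omega$ of $p$ when the tournament is run at the affordable coarse scale, so it cannot deliver the claimed $\ve$-accuracy. You correctly sensed that the condition on $\ve$ is the crux, but the mechanism it enables is select-then-estimate across two scales, not absorption of the log-cardinality into $\chi(p)$.
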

The proof can be found in the Supplementary Materials. Thus, under the factorization hypothesis, the sample complexity does not scale with $|\cX| \times |\cY|$ as in \eqref{eq:sc1} but with $\chi(p)$ as introduced in \eqref{eq:SC}, 
which is always smaller than~\eqref{eq:sc1} (and often {\em exponentially smaller}). The strength of Theorem \ref{thm:sc} is that the estimator $\wh p$ does not depend on the knowledge of an optimal factorization (i.e. one minimizing \eqref{eq:SC}), but performs just as well as if it did. We show in the next section that NNs match this performance, at least in a controlled experimental setting. 


\section{Experiments} \label{sec:expe}

We explore the practical implications of our discrete factorized structure~\eqref{eq:prod} on the performance of MLPs. We link these performances to the parameters of the factorization and to the various hyperparameters of our models, highlighting that MLPs perform better under~\eqref{eq:prod} while requiring fewer parameters and at a lower computational cost.

\subsection{Experimental Design} \label{sec:expdesign}

We describe in this section the data generation process, namely our procedure for generating distributions $p(y,x)$ satisfying~\eqref{eq:prod}, and then describe our chosen MLP architecture.  The precise values of the parameters of our data generation process, of the model's hyperparameters and of their associated default values can be found in Table 1 in the Supplementary.

\paragraph{Data specification} 
Unless otherwise specified, we let the token spaces be $\cX = \cY \cong [4096]$; as $N$ and $M$ are equal to $2^{12}$, this choice allows for multiple possible factorizations.
Our data model depends on four parameters.

\begin{itemize}[leftmargin=.9cm]
  \item[(P1)] Input factors: $(p_i)_{i\in[k]} \in \bN^k$.
   \item[(P2)] Output factors: $(q_j)_{j\in[\ell]}\in \bN^\ell$
  \item[(P3)] Number of parents: $(|I_j|)_{j\in[\ell]} \in \N^\ell$.
\end{itemize}
Although this results in slightly less general factorizations (as all the $y$-factors have the same number of parents), making $|I_j|$ constant across $j\in[\ell]$ reduces randomness in the graph generation process. 
Notably, it turns $\chi$~\eqref{eq:SC} and $\bar\chi$~\eqref{eq:AC} into deterministic quantities.
Alternatively, we may want to consider:
\begin{itemize}[leftmargin=.9cm]
  \item[(P3')] Connectivity parameter: $\beta\in[0, 1]$.
\end{itemize}
For each tuple of integers $(i,j) \in [k] \times [\ell]$, we let $i$ belong to $I_j$, which is equivalent to the coordinates $x_i$ appearing in $\pa_j$, with probability  $\beta$. 
In other terms, $\beta$ controls the probability of activation of each edge in the bipartite graph linking the $x$-factors to the $y$-factors. 
Unless otherwise specified, we set ourselves in (P3) (rather than (P3')). 

\begin{itemize}[leftmargin=.9cm]
  \item[(P4)] Concentration parameter: $\alpha \in \R_+$.
\end{itemize}
We let the marginal distribution $p(x)$ be the uniform law on $\cX$ for all experiments.
We generate conditional distributions as follows: for each $j\in [k]$, and each point $\pa_j \in \prod_{i\in I_j} \cX_i$, we sample the vector $(p(y_j|\pa_j))_{y_j\in\cY_j}$ according to a Dirichlet distribution of parameter $(\alpha,\dots,\alpha) \in \bR^{q_j}$.
When $\alpha = 10^{-3}$, this leads to $(p(y_j|\pa_j))$ being very close in distribution to a Dirac, while $\alpha = 1$ leads to $(p(y_j|\pa_j))$ being more uniformly sampled across the simplex. 

\paragraph{Model and optimizer specification.}
To keep our experimental design simple, we use an MLP block and an optimizer that are fairly standard for LLMs. The exact specifications are detailed in the Supplementary Materials.
It leads to five more hyperparameters:
\begin{itemize}[leftmargin=.9cm]
  \item[(P5)] The embedding dimension: $d\in\N$;
  \item[(P6)] The hidden dimension: $h\in\N$;
  \item[(P7)] The number of layers: $L\in\N$.
  \item[(P8)] The initial learning rate: $\eta > 0$;
  \item[(P9)] The number of epochs: $T \in \bN$. 
\end{itemize}
We summarize in Table \ref{tab:1} of the Supplementary Materials our default hyperparameters.
By default, our figures are averaged over 100 independent runs.
In some settings, we wait until the training procedure converges to study the optimal point reached by the networks.
This requires a larger number of epochs, which we set to $T = 10^6$ (otherwise $T=10^3$).
For these experiments, we only average our figures over 10 runs.

\subsection{Single Pass Study}

\begin{figure}[ht]
    \centering
    \includegraphics{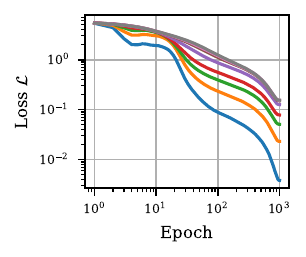}
    \raisebox{2.75em}{\includegraphics{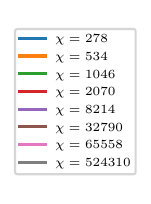}}
    \vspace{-1em}
    \caption{
        Value of the population loss ${\cal L}$~\eqref{eq:loss} in the single pass setting as a function of the number of epochs for various values of the statistical complexity parameter $\chi$~\eqref{eq:SC}.
        We obtain various values of $\chi$ by modifying the data model parameter (P1). 
    }
    \label{fig:iid}
\end{figure}

In our first experiment, we consider the typical setting of LLMs training: a single pass over the data.
In this setting, each update is done by sampling a batch of i.i.d. samples $(x^{(t)}, y^{(t)}) \sim p$ drawn according to a data distribution generated as described in Subsection~\ref{sec:expdesign}.
We use batches of size $n = 8096$, we set all parameters and hyperparameters to default values except for the input factorization $(p_i)_{i \in [\ell]}$, and we study how the test loss evolves with respect to the number of epochs.\footnote{More specifically, we let $(p_i)$ be $(2)_{i\in[12]}$, $(4)_{i\in[6]}$, $(8)_{i\in[4]}$, $(16)_{i\in[3]}$, $(64)_{i\in[2]}$, or $(4096)_{i\in[1]}$.}
We do not vary output factorization $(q_j)$ to avoid confounding factors caused by changes in the distribution  of the entropy of $p(y|x)$.\footnote{
    Indeed, we observe that neural networks performance is impacted by the entropy of the target conditional distributions.
}
In this setting, the embedding $e:\X\to\R^d$ is learned.

We find that the learning speed  is correlated with the statistical complexity $\chi$~\eqref{eq:SC}, as highlighted in Figure~\ref{fig:iid}. 
This suggests that the MLP is able to leverage the implicit product form of the target distribution to improve its learning.

\subsection{Compression Study}\label{sec:comp}

\begin{figure}[ht]
    \centering
    \includegraphics{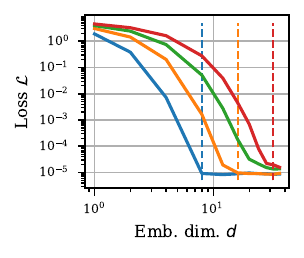}
    \raisebox{3em}{\includegraphics[width=8em]{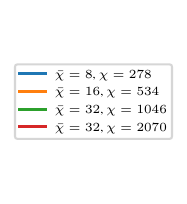}}
    \vspace{-1em}
    \caption{
        Value of the population loss ${\cal L}$~\eqref{eq:loss} in the compression setting after $T=10^6$ epochs of training as a function of the embedding dimension $d$ for $|I_j| \in \{1, 2, 3, 4\}$ (P3).
        The dashed lines indicate $d = \bar\chi$.
    }
    \label{fig:compression}
\end{figure}

\begin{figure}[ht]
    \centering
    \includegraphics{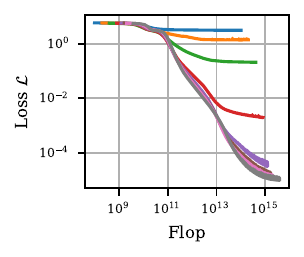}
    \raisebox{2.5em}{\includegraphics{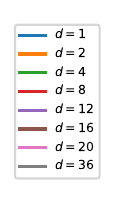}}
    \vspace{-1em}
    \caption{
        Value of the population loss ${\cal L}$~\eqref{eq:loss} in the compression setting as a function of the compute for various values of the embedding dimension $d$ (P5).
        Note that $\bar \chi = 16$~\eqref{eq:AC} with our default experimental parameters.
    }
    \label{fig:compression-flop}
\end{figure}

In this setting, we assume that the batch size $n$ approaches to infinity, so that we directly observe the conditional probabilities $p(y|x)$ for all values of $x$, and can optimize directly on the population loss ${\cal L}$. 
Once again, the embedding $e:\X\to\R^d$ is learned.
We know from the theoretical analysis in Subsection~\ref{sec:approx} that choosing $d \geq \bar\chi$ enables $\widehat p$ to match the ground-truth distribution $p$ for a well-chosen set of weights defining $u$, $e$ and $F$.
However, it is well-known that the class of functions that a neural network can represent is much bigger than the class of functions that are actually reachable after a reasonable number of gradient updates \citep{zhang2017understanding}, hence the need for experimental validation of the impact of both the complexity $\bar\chi$ associated with a probability distribution satisfying~\eqref{eq:prod} and the model's embedding dimension $d$.

Our findings are presented in Figure~\ref{fig:compression} and~\ref{fig:compression-flop}.
In Figure~\ref{fig:compression}, we observe that letting $d \geq \bar\chi$~\eqref{eq:AC} leads to the network learning the problem well, with the loss saturating close to machine precision,  which supports our claims from  Section~\ref{sec:approx}.
When $d \leq \bar\chi$, the loss also depends on the statistical complexity $\chi$~\eqref{eq:SC}. 
This is not surprising, as it is also true of the global minimum over all functions taking the form of~\eqref{eq:functionalform}. 
In Figure~\ref{fig:compression-flop}, we see that the loss initially follows a power law as a function of the compute, before eventually saturating due to limitations imposed by the embedding dimension $d$.
 
\subsection{Generalization Study} \label{sec:genstu}

\begin{figure}[ht]
    \centering
    \includegraphics{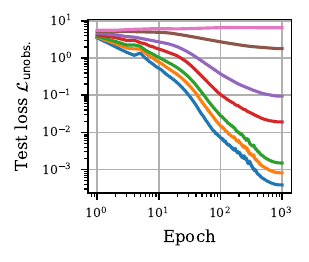}
    \raisebox{2.5em}{\includegraphics{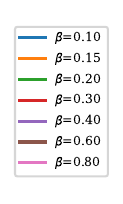}}
    \vspace{-1em}
    \caption{
        Value of the loss ${\cal L}_{\textrm{unobs.}}$ on unobserved data in the generalization setting as a function of the number of epochs for various values of the connectivity parameter $\beta$ (P3').
    }
    \label{fig:filtration}
\end{figure}

\begin{figure}[ht]
    \centering
    \includegraphics{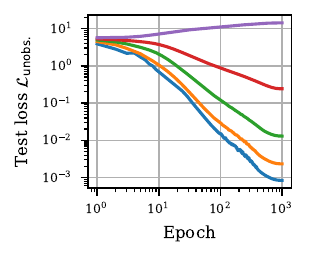}
    \raisebox{2.5em}{\includegraphics{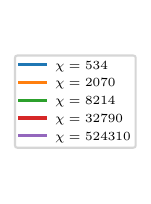}}
    \vspace{-1em}
    \caption{Value of the loss ${\cal L}_{\textrm{unobs.}}$ on unobserved data in the generalization setting as a function of the number of epochs for various values of the statistical parameter $\chi$~\eqref{eq:SC}.
    We obtain various values of $\chi$ by modifying (P1).
    } 
    \label{fig:generalization}
\end{figure}

In this setting, we study NNs' capacity for generalizing their learning of $p(y|x)$ to yet unseen inputs by leveraging the factorial nature of the task at hand.
More concretely, we assume that we only have access to a subset $\cX_{\textrm{obs}} \subset \cX$ of the data at training time.
This results in one additional experimental parameter:
\begin{itemize}[leftmargin=.9cm]
  \item[(P10)] The data split: $\gamma = \card{\cX_{\textrm{obs}}}/\card\cX$
\end{itemize}
By default we let $\gamma = 0.9 = 90\%$.
As in the compression setting, we set ourselves in the ideal scenario where we observe $p(y|x)$ for $x\in \cX_{\textrm{obs}}$, as if we were provided with very large batches.
We are interested in understanding how the network can generalize to unseen values of $x\in \cX\setminus \cX_{\textrm{obs}}$.
This setting falls into the scope of covariate shift \citep{sugiyama2007covariate} where the distribution of $x$ changes but the conditional distribution of $y|x$ remains the same, with the additional challenge here that the support of the shifted covariate is disjoint from the support of the observed one. 
To allow for generalization, the embedding is set to a fixed factorization-compatible embedding $\tilde e:\X\to\R^d$ (as defined in Subsection~\ref{subsec:embeddings}), with the embedding $e^i$ of each factor sampled as a centered isotropic Gaussian vector.

We observe in Figure~\ref{fig:filtration}  the effect of the connectivity of the factorization graph (see Figure~\ref{fig:dag}), driven by the connectivity parameter $\beta$ (P3'), on the  MLP's ability to generalize. 
The $y$-axis represents the loss ${\cal L}_{\textrm{unobs.}}$ on {\em unobserved data}.
As expected, when $\beta = 0$, there is no causal effect of $x$ on $y$, and, as $p(y|x)$ is thus constant across $\cX$, it is easy to generalize.
On the other end of the spectrum, when $\beta =1$, then $\pa_j = \cX$ for all $j$ and one cannot hope to generalize on unseen data.
We also observe that the generalization capability is linked to the statistical complexity (Figure~\ref{fig:generalization}).
This figure is obtained by varying the input factors as in the compression setting, with all the other values set to default. 

\begin{figure}[t]
    \centering
    \includegraphics{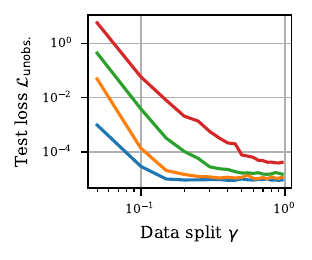}
    \raisebox{3em}{\includegraphics{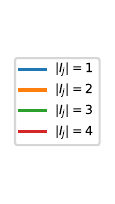}}
    \vspace{-1em}
    \caption{
        Value of the loss ${\cal L}_{\textrm{unobs.}}$ on unobserved data in the generalization setting after $T=10^6$ epochs of training as a function of the data split parameter $\gamma$ (P10) for various values of the number of parents $|I_j|$ (P3).
    }
    \label{fig:split}
\end{figure}

\begin{figure}[t!]
    \centering
    \includegraphics{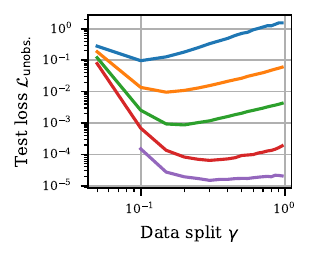}
    \raisebox{3em}{\includegraphics{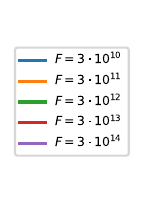}}
    \vspace{-1em}
    \caption{
        Value of the loss ${\cal L}_{\textrm{unobs.}}$ on unobserved data in the generalization setting as a function of the data split parameter $\gamma$ (P10) for various values of the number $F$ of flops.
    }
    \label{fig:split-flop}
    \includegraphics{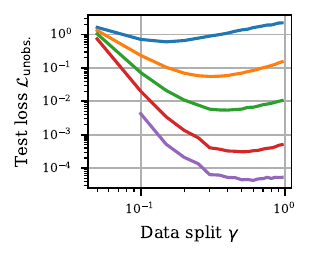}
    \raisebox{3em}{\includegraphics{figs/isoflop_leg.pdf}}
    \vspace{-1em}
    \caption{
        Same as Figure~\ref{fig:split-flop}, but with $|I_j|=3$  rather than  $|I_j|=2$ (P3).
    }
    \label{fig:split-flop-bis}
\end{figure}

Finally, we study the effect of the percentage of training data $\gamma = |\cX_{\mathrm{obs}}|/|\cX|$.
We observe that when dealing with highly sparse graphs, one does not need a lot of data to learn the underlying mechanisms that generated the data (Figure~\ref{fig:split}).
In Figures~\ref{fig:split-flop} and ~\ref{fig:split-flop-bis}, we also plot the ``isoflop'' graphs of the generalization loss ${\cal L}_{\textrm{unobs.}}$ as a function of the data split~$\gamma$ given different constraints on the number $F$ of floating-point operations (FLOP), which in turn constrain the number of training epochs $T$ since $F \propto \gamma T$ (as each epoch consists in a single pass over each training point).
Interestingly, we observe that for a given compute budget, it is better to train on the same data multiple times rather than to use new data.
This is particularly true when in the presence of highly structured data (compare Figure~\ref{fig:split-flop} with Figure~\ref{fig:split-flop-bis}).
This resonates with the recent findings of \citet{charton2024emergentpropertiesrepeatedexamples}.

\section{Discussion}

In this work, we introduced a new set of structural assumptions on discrete data. 
Through our theoretical analysis, we provided new insights into how these assumptions can simplify the learning process, and we showed through extensive experiments that NNs can exploit these hidden structures to enhance their performances. 

Our key takeaway is the presence of implicit relationships of the form 
\[
\mathcal{L} \propto \Lambda(\xi, G),
\]
where $\mathcal{L}$ is the loss, \(\xi\) is either the compute (e.g., Figure~\ref{fig:iid}), the model capacity (e.g., Figure~\ref{fig:compression}), or the number of data (e.g., Figure~\ref{fig:split}); and \(G\) represents the hidden ``graphical'' factorization underlying the data.
The function \(\Lambda\) is typically a decreasing function of \(\xi\) and an increasing function of complexity parameters of $G$, such as the number of factors (Figures~\ref{fig:iid},~\ref{fig:compression}, and~\ref{fig:generalization}), the connectivity of the graph (Figures~\ref{fig:filtration} and~\ref{fig:split}), or $\bar\chi$ and $\chi$ defined in \eqref{eq:AC} and \eqref{eq:SC}.



On a final note, and as hinted at in Subsection~\ref{subsec:embeddings}, we previse a possible strong link between our data model and {\em transfer learning}. Transfer learning focuses on transferring knowledge learned from one task to improve learning in another, often related but distinct task or domain \citep{pan2009survey}. Our framework could give a theoretical ground to explain the performance of transfer learning procedures on discrete data when compatible factorizations arise in different tasks, in the sense that a learned embedding adapted to a given task could become a factorization-compatible embedding for another subsequent task.

\bibliography{bibli}

\begin{thebibliography}{38}
\providecommand{\natexlab}[1]{#1}
\providecommand{\url}[1]{\texttt{#1}}
\expandafter\ifx\csname urlstyle\endcsname\relax
  \providecommand{\doi}[1]{doi: #1}\else
  \providecommand{\doi}{doi: \begingroup \urlstyle{rm}\Url}\fi

\bibitem[Ahuja \& Mansouri(2024)Ahuja and Mansouri]{ahuja2024provable}
Ahuja, K. and Mansouri, A.
\newblock On provable length and compositional generalization, 2024.

\bibitem[Alabdulkareem et~al.(2024)Alabdulkareem, Arnold, Lee, Feenstra, Katz, and Barbu]{alabdulkareem2024securellmusingcompositionalitybuild}
Alabdulkareem, A., Arnold, C.~M., Lee, Y., Feenstra, P.~M., Katz, B., and Barbu, A.
\newblock Securellm: Using compositionality to build provably secure language models for private, sensitive, and secret data, 2024.
\newblock URL \url{https://arxiv.org/abs/2405.09805}.

\bibitem[Arnaboldi et~al.(2024)Arnaboldi, Dandi, Krzakala, Pesce, and Stephan]{arnaboldi2024repetita}
Arnaboldi, L., Dandi, Y., Krzakala, F., Pesce, L., and Stephan, L.
\newblock Repetita iuvant: Data repetition allows sgd to learn high-dimensional multi-index functions.
\newblock \emph{arXiv preprint arXiv:2405.15459}, 2024.

\bibitem[Arora \& Goyal(2023)Arora and Goyal]{arora2023theory}
Arora, S. and Goyal, A.
\newblock A theory for emergence of complex skills in language models, 2023.

\bibitem[Ba et~al.(2016)Ba, Kiros, and Hinton]{ba2016layernormalization}
Ba, J.~L., Kiros, J.~R., and Hinton, G.~E.
\newblock Layer normalization, 2016.
\newblock URL \url{https://arxiv.org/abs/1607.06450}.

\bibitem[Barak et~al.(2022)Barak, Edelman, Goel, Kakade, Malach, and Zhang]{Barak2022HiddenPI}
Barak, B., Edelman, B.~L., Goel, S., Kakade, S.~M., Malach, E., and Zhang, C.
\newblock Hidden progress in deep learning: Sgd learns parities near the computational limit.
\newblock \emph{ArXiv}, abs/2207.08799, 2022.
\newblock URL \url{https://api.semanticscholar.org/CorpusID:250627142}.

\bibitem[Bietti et~al.(2023)Bietti, Bruna, and Pillaud-Vivien]{bietti2023learning}
Bietti, A., Bruna, J., and Pillaud-Vivien, L.
\newblock On learning gaussian multi-index models with gradient flow.
\newblock \emph{arXiv preprint arXiv:2310.19793}, 2023.

\bibitem[Brown et~al.(2020)Brown, Mann, Ryder, Subbiah, Kaplan, Dhariwal, Neelakantan, Shyam, Sastry, Askell, Agarwal, Herbert-Voss, Krueger, Henighan, Child, Ramesh, Ziegler, Wu, Winter, Hesse, Chen, Sigler, Litwin, Gray, Chess, Clark, Berner, McCandlish, Radford, Sutskever, and Amodei]{brown2020language}
Brown, T.~B., Mann, B., Ryder, N., Subbiah, M., Kaplan, J., Dhariwal, P., Neelakantan, A., Shyam, P., Sastry, G., Askell, A., Agarwal, S., Herbert-Voss, A., Krueger, G., Henighan, T., Child, R., Ramesh, A., Ziegler, D.~M., Wu, J., Winter, C., Hesse, C., Chen, M., Sigler, E., Litwin, M., Gray, S., Chess, B., Clark, J., Berner, C., McCandlish, S., Radford, A., Sutskever, I., and Amodei, D.
\newblock Language models are few-shot learners, 2020.
\newblock URL \url{https://arxiv.org/abs/2005.14165}.

\bibitem[Cagnetta et~al.(2024)Cagnetta, Petrini, Tomasini, Favero, and Wyart]{Cagnetta2024}
Cagnetta, F., Petrini, L., Tomasini, U.~M., Favero, A., and Wyart, M.
\newblock How deep neural networks learn compositional data: The random hierarchy model.
\newblock \emph{Physical Review X}, 2024.

\bibitem[Candes \& Plan(2010)Candes and Plan]{candes2010matrix}
Candes, E.~J. and Plan, Y.
\newblock Matrix completion with noise.
\newblock \emph{Proceedings of the IEEE}, 98\penalty0 (6):\penalty0 925--936, 2010.

\bibitem[Canonne(2020)]{canonne2020short}
Canonne, C.~L.
\newblock A short note on learning discrete distributions.
\newblock \emph{arXiv preprint arXiv:2002.11457}, 2020.

\bibitem[Canonne et~al.(2018)Canonne, Diakonikolas, Kane, and Stewart]{canonne2018testing}
Canonne, C.~L., Diakonikolas, I., Kane, D.~M., and Stewart, A.
\newblock Testing conditional independence of discrete distributions.
\newblock In \emph{Proceedings of the 50th Annual ACM SIGACT Symposium on Theory of Computing}, pp.\  735--748, 2018.

\bibitem[Caponnetto \& De~Vito(2006)Caponnetto and De~Vito]{caponnetto2007optimal}
Caponnetto, A. and De~Vito, E.
\newblock Optimal rates for the regularized least-squares algorithm.
\newblock \emph{Foundations of Computational Mathematics}, 2006.

\bibitem[Charton \& Kempe(2024)Charton and Kempe]{charton2024emergentpropertiesrepeatedexamples}
Charton, F. and Kempe, J.
\newblock Emergent properties with repeated examples, 2024.
\newblock URL \url{https://arxiv.org/abs/2410.07041}.

\bibitem[Damian et~al.(2024)Damian, Pillaud-Vivien, Lee, and Bruna]{damian2024computational}
Damian, A., Pillaud-Vivien, L., Lee, J.~D., and Bruna, J.
\newblock The computational complexity of learning gaussian single-index models.
\newblock \emph{arXiv preprint arXiv:2403.05529}, 2024.

\bibitem[Daniely \& Malach(2020)Daniely and Malach]{daniely2020learning}
Daniely, A. and Malach, E.
\newblock Learning parities with neural networks.
\newblock \emph{Advances in Neural Information Processing Systems}, 33:\penalty0 20356--20365, 2020.

\bibitem[Dziri et~al.(2023)Dziri, Lu, Sclar, Li, Jiang, Lin, West, Bhagavatula, Bras, Hwang, Sanyal, Welleck, Ren, Ettinger, Harchaoui, and Choi]{dziri2023faithfatelimitstransformers}
Dziri, N., Lu, X., Sclar, M., Li, X.~L., Jiang, L., Lin, B.~Y., West, P., Bhagavatula, C., Bras, R.~L., Hwang, J.~D., Sanyal, S., Welleck, S., Ren, X., Ettinger, A., Harchaoui, Z., and Choi, Y.
\newblock Faith and fate: Limits of transformers on compositionality, 2023.
\newblock URL \url{https://arxiv.org/abs/2305.18654}.

\bibitem[Fefferman et~al.(2016)Fefferman, Mitter, and Narayanan]{fefferman2013testing}
Fefferman, C., Mitter, S., and Narayanan, H.
\newblock Testing the manifold hypothesis.
\newblock \emph{Journal of the American Mathematical Society}, 29:\penalty0 983--1029, 2016.

\bibitem[Feldman et~al.(2008)Feldman, O'Donnell, and Servedio]{feldman2008learning}
Feldman, J., O'Donnell, R., and Servedio, R.~A.
\newblock Learning mixtures of product distributions over discrete domains.
\newblock \emph{SIAM Journal on Computing}, 37\penalty0 (5):\penalty0 1536--1564, 2008.

\bibitem[Guo et~al.(2024)Guo, T{\'o}th, Sch{\"o}lkopf, and Husz{\'a}r]{guo2024causal}
Guo, S., T{\'o}th, V., Sch{\"o}lkopf, B., and Husz{\'a}r, F.
\newblock Causal de finetti: On the identification of invariant causal structure in exchangeable data.
\newblock \emph{Advances in Neural Information Processing Systems}, 36, 2024.

\bibitem[He et~al.(2015)He, Zhang, Ren, and Sun]{he2015deep}
He, K., Zhang, X., Ren, S., and Sun, J.
\newblock Deep residual learning for image recognition, 2015.
\newblock URL \url{https://arxiv.org/abs/1512.03385}.

\bibitem[Hristache et~al.(2001)Hristache, Juditsky, Polzehl, and Spokoiny]{hristache2001structure}
Hristache, M., Juditsky, A., Polzehl, J., and Spokoiny, V.
\newblock Structure adaptive approach for dimension reduction.
\newblock \emph{Annals of Statistics}, pp.\  1537--1566, 2001.

\bibitem[Jordan(2004)]{jordan2004}
Jordan, M.
\newblock {Graphical Models}.
\newblock \emph{Statistical Science}, 19\penalty0 (1):\penalty0 140 -- 155, 2004.

\bibitem[Kaplan et~al.(2020)Kaplan, McCandlish, Henighan, Brown, Chess, Child, Gray, Radford, Wu, and Amodei]{scalinglawskaplan}
Kaplan, J., McCandlish, S., Henighan, T., Brown, T.~B., Chess, B., Child, R., Gray, S., Radford, A., Wu, J., and Amodei, D.
\newblock Scaling laws for neural language models, 2020.

\bibitem[Lee et~al.(2024)Lee, Oko, Suzuki, and Wu]{lee2024neural}
Lee, J.~D., Oko, K., Suzuki, T., and Wu, D.
\newblock Neural network learns low-dimensional polynomials with sgd near the information-theoretic limit.
\newblock \emph{arXiv preprint arXiv:2406.01581}, 2024.

\bibitem[Mahajan et~al.(2024)Mahajan, Pezeshki, Mitliagkas, Ahuja, and Vincent]{CompositionalRisk}
Mahajan, D., Pezeshki, M., Mitliagkas, I., Ahuja, K., and Vincent, P.
\newblock Compositional risk minimization, 2024.

\bibitem[Marion et~al.(2024)Marion, Berthier, Biau, and Boyer]{marion2024attentionlayersprovablysolve}
Marion, P., Berthier, R., Biau, G., and Boyer, C.
\newblock Attention layers provably solve single-location regression, 2024.
\newblock URL \url{https://arxiv.org/abs/2410.01537}.

\bibitem[Neykov et~al.(2021)Neykov, Balakrishnan, and Wasserman]{neykov2021minimax}
Neykov, M., Balakrishnan, S., and Wasserman, L.
\newblock Minimax optimal conditional independence testing.
\newblock \emph{The Annals of Statistics}, 49\penalty0 (4):\penalty0 2151--2177, 2021.

\bibitem[Pan \& Yang(2009)Pan and Yang]{pan2009survey}
Pan, S.~J. and Yang, Q.
\newblock A survey on transfer learning.
\newblock \emph{IEEE Transactions on knowledge and data engineering}, 22\penalty0 (10):\penalty0 1345--1359, 2009.

\bibitem[Parascandolo et~al.(2018)Parascandolo, Kilbertus, Rojas-Carulla, and Schölkopf]{parascandolo2018learning}
Parascandolo, G., Kilbertus, N., Rojas-Carulla, M., and Schölkopf, B.
\newblock Learning independent causal mechanisms, 2018.

\bibitem[Shazeer(2020)]{shazeer2020gluvariantsimprovetransformer}
Shazeer, N.
\newblock Glu variants improve transformer, 2020.

\bibitem[Sugiyama et~al.(2007)Sugiyama, Krauledat, and M{\"u}ller]{sugiyama2007covariate}
Sugiyama, M., Krauledat, M., and M{\"u}ller, K.-R.
\newblock Covariate shift adaptation by importance weighted cross validation.
\newblock \emph{Journal of Machine Learning Research}, 8\penalty0 (5), 2007.

\bibitem[Touvron et~al.(2023)Touvron, Lavril, Izacard, Martinet, Lachaux, Lacroix, Rozière, Goyal, Hambro, Azhar, Rodriguez, Joulin, Grave, and Lample]{touvron2023llama}
Touvron, H., Lavril, T., Izacard, G., Martinet, X., Lachaux, M.-A., Lacroix, T., Rozière, B., Goyal, N., Hambro, E., Azhar, F., Rodriguez, A., Joulin, A., Grave, E., and Lample, G.
\newblock Llama: Open and efficient foundation language models, 2023.
\newblock URL \url{https://arxiv.org/abs/2302.13971}.

\bibitem[Valvoda et~al.(2023)Valvoda, Saphra, Rawski, Williams, and Cotterell]{valvoda2023benchmarkingcompositionalityformallanguages}
Valvoda, J., Saphra, N., Rawski, J., Williams, A., and Cotterell, R.
\newblock Benchmarking compositionality with formal languages, 2023.
\newblock URL \url{https://arxiv.org/abs/2208.08195}.

\bibitem[Wang et~al.(2024)Wang, Yue, Su, and Sun]{wang2024grokkedtransformersimplicitreasoners}
Wang, B., Yue, X., Su, Y., and Sun, H.
\newblock Grokked transformers are implicit reasoners: A mechanistic journey to the edge of generalization, 2024.
\newblock URL \url{https://arxiv.org/abs/2405.15071}.

\bibitem[Wies et~al.(2022)Wies, Levine, and Shashua]{wies2022sub}
Wies, N., Levine, Y., and Shashua, A.
\newblock Sub-task decomposition enables learning in sequence to sequence tasks.
\newblock \emph{arXiv preprint arXiv:2204.02892}, 2022.

\bibitem[Zhang et~al.(2017)Zhang, Bengio, Hardt, Recht, and Vinyals]{zhang2017understanding}
Zhang, C., Bengio, S., Hardt, M., Recht, B., and Vinyals, O.
\newblock Understanding deep learning requires rethinking generalization, 2017.

\bibitem[Zhang et~al.(2024)Zhang, Nolte, Sadhukhan, Chen, and Bottou]{zhang2024memory}
Zhang, J., Nolte, N., Sadhukhan, R., Chen, B., and Bottou, L.
\newblock Memory mosaics.
\newblock \emph{arXiv preprint arXiv:2405.06394}, 2024.

\end{thebibliography}
\bibliographystyle{apalike}

\clearpage
\appendix
\thispagestyle{empty}
 \onecolumn

\section{Theory Supplement}

\begin{proof}[Proof of Theorem \ref{thm:ac}] In Section \ref{sec:theory}, we exhibited a solution to the matrix equation 
\begin{equation}\tag{\ref{eq:matrixEq}}
   (\log p(y|x))_{y,x} \in \R^{M\times N} = U^\top G + \text{column-wise constant}
\end{equation}
with $U\in \R^{Q\times M}$ and $G\in \R^{Q\times N}$ with as before $  Q = \sum_{j\in [\ell]} q_j$.
Another exact solution, which works as a dual of sorts to the first one, is given by
\[G' = \begin{bmatrix}
    (\ind{\pa_1})_{ x\in \cX} \\
    \vdots \\
    (\ind{\pa_\ell} )_{ x\in \cX}
    \end{bmatrix}   \in \R^{\bar P\times N}
\]
and
\[
U' = \begin{bmatrix}
    (\log p(y_1| z_1 ))_{ z_1\in \pa_1,y \in \cY} \\
    \vdots \\
    (\log p(y_\ell| z_\ell ))_{ z_\ell\in \pa_\ell,y \in \cY} 
    \end{bmatrix}   \in \R^{\bar P \times M},
\]
where $\bar P = \sum_{j\in[\ell]} \card{\pa_j}$, $(\ind{\pa_j})_{ x\in \cX} \in \R^{|\pa_j| \times N}$ is the matrix whose column indexed by $x\in \cX$ is the one-hot encoding of the combined factor $\pa_j$ of $x$, and the rows of the matrix $(\log p(y| z_j ))_{ z_j\in \pa_j,y \in \cY} \in \R^{|\pa_j| \times M}$ are indexed by the elements $z_j$ of $\pa_j$.
Combining those two solutions yields a third solution:
for each $j \in[\ell] $, let $G_j$ be the matrix $(\ind{\pa_j})_{ x\in \cX} \in  \R^{\card{\pa_j} \times N}$ if $\card{\pa_j} < q_j$, and let it be the matrix $(\log p(y_j|x ))_{y_j \in \cY_j, x\in \cX} \in \R^{q_j  \times N}$ otherwise.
Likewise, let $U_j$ be the matrix $(\log p(y_j| z_j ))_{ z_j\in \pa_j,y \in \cY} \in \R^{\card{\pa_j}  \times M} $ if  $\card{\pa_j} < q_j$, and let it be the matrix $ (\ind{y_j})_{ y\in \cY} \in \R^{q_j \times M} $ otherwise. 
Then the matrices
\[G'' = \begin{bmatrix}
    G_1 \\
    \vdots \\
    G_{\ell}
    \end{bmatrix}   \in \R^{ \sum_{j\in [\ell]}\min \{ |\pa_j|,  q_j \} \times N}
\]
and
\[
U'' = \begin{bmatrix}
    U_1 \\
    \vdots \\
    U_{\ell} 
    \end{bmatrix}   \in \R^{ \sum_{j\in [\ell]}\min \{ |\pa_j|,  q_j \} \times M}
\]
are solutions to \eqref{eq:matrixEq}, which shows that the embedding dimension $d$ can in fact be as small as  $\sum_{j\in [\ell]}\min \{ |\pa_j|,  q_j \}$ and still allow for a perfect representation of the conditional distribution.
\end{proof}

As alluded to in Section \ref{sec:theory}, the matrix $\Lambda = (\log p(y|x))_{y,x} $ having a small rank does not imply the existence of a non-trivial factorization.
Consider the following example: let $N,M \in \N$ and pick any $t_1,\ldots, t_N \in (0,1)$, as well as any $v\in (0,1)^M$ such that $\sum_{i=1}^M v_i =1$.
The matrix
\[
\Lambda := \begin{bmatrix}
   \log t_1& \ldots & \log t_N \\
    \log (1-t_1) + \log v_1& \ldots & \log (1-t_N) + \log v_1 \\
    \vdots & \ddots & \vdots \\
     \log (1-t_1) + \log v_M & \ldots & \log (1-t_N) + \log v_M
    \end{bmatrix}   \in \R^{ (M+1) \times N}
\]
defines a conditional (log-)distribution $\log p(y|x)$, and its rank is at most $3$. However, the distribution admits no non-trivial factorization for generic values of $t_1,\ldots, t_N$ and $v$.

Let us now consider the case of factorization-compatible embeddings, which was alluded to in the main text.
As before, $\Lambda$ must be expressed as a product $U^\top F(E)$ (up to some column-wise additive constant), but now  we have the constraint that the columns of $E$ are of the form
\begin{equation*}
    \tilde e_x = [E_1 \hdots E_k] 
    \begin{bmatrix}
    \ind{x_1} \\
    \vdots \\
    \ind{x_k}
    \end{bmatrix} = \sum_{i\in[k]} e^i(x_i),
\end{equation*}

To ease notations, let us set
\[
    P = \sum_{i\in[k]} p_i, \qquad \bar P = \sum_{j\in[\ell]} \card{\pa_j}.
\]
Assume that the matrix  $[E_1 \hdots E_k]$ is invertible, which is generically the case as soon as $d \geq P$ (as noted in Subsection~\ref{subsec:embeddings}), and let $T$ be its inverse, which maps $e_x$ to the product $(\ind{x_1}, \ldots,\ind{x_k} )\in \R^P$ of the one-hot encodings of the factors $x_i$ of $x$.
Let also 
$$\Xi: \R^P \to \R^{\bar P}$$ 
map $(\ind{x_1}, \ldots,\ind{x_\ell} )$ to the corresponding product $(\ind{\pa_1}, \ldots,\ind{\pa_\ell} ) $ of one-hot encoding of the parents.
The map $\Xi$ is non-linear, but it can be represented by a single feedforward layer (whose number of parameters is roughly $P\times \bar P$).
Then an exact solution can be expressed as
\begin{equation*}
 F: e\mapsto \Xi \cdot T \cdot e 
\end{equation*}
and
\[
    U = \begin{bmatrix}
    (\log p(y_1| z_1 ))_{ z_1\in \pa_1,y \in \cY} \\
    \vdots \\
    (\log p(y_\ell| z_\ell ))_{ z_\ell\in \pa_\ell,y \in \cY} 
    \end{bmatrix}   \in \R^{\bar P \times M},
\]
where the rows of the submatrix $(\log p(y_j| z_j ))_{ z_j\in \pa_j,y \in \cY} \in \R^{|\pa_j| \times M}$ are indexed by the elements $z_j$ of $\pa_j$.
As mentioned in the main text, this shows that we can again bound the number of parameters and the embedding dimension needed to approximate the conditional distribution in terms of factorization characteristics.

\begin{proof}[Proof of Theorem \ref{thm:sc}] In the proof, $C$ will denote a generic numeric constant, whose value might change from one line to another.
For a given factorization $\cF$, we denote by $q_{j}$ the marginal of $p$ over $\prod_{i \in I_j} \cX_i$, and $p_{j}$ the marginal of $p$ over $\prod_{i \in I_j} \cX_i \times \cY_j$.
We let $\wh p_{j}$ and $\wh q_{j}$ be their corresponding histogram estimators. Thanks to Theorem \ref{thm:tv}, it holds for all $j \in [\ell]$ and $\ve_j,\delta_j >0$,
$$
p^{\otimes n}(\tv(\wh q_{j}, q_{j}) \leq \ve_j) \geq 1-\delta_j,  
$$
whenever
$$
n \geq C \frac{|\pa_j| + \log(1/\delta_j)}{\ve_j^2}.
$$
In particular, under the previous condition and if $\ve_j < 1/|\cX| \leq 1/|\pa_j|$, it holds that $\wh q_j(\pa_j) > 0$ for all $\pa_j$. This allows us to define the estimator of the conditional probability $p(y_j|\pa_j)$ as
$$
\wh p(y_j|\pa_j) := \frac{\wh p_{j}(\pa_j,y_j)}{\wh q_{j}(\pa_j)},  
$$
and the subsequent estimator of $p_\cF$ as
$$
\wh p_\cF (x,y) := \frac{1}{|\cX|}  \prod_{j \in [\ell]} \wh p(y_j | \pa_j).
$$
Using again Theorem \ref{thm:tv}, one finds
$$
p^{\otimes n}(\tv(\wh p_j, p_j) \leq \ve_j) \geq 1-\delta_j,  
$$
whenever
$$
n \geq C \frac{|\pa_j| q_j + \log(1/\delta_j)}{\ve_j^2},
$$
Now notice that
\begin{align*}
\tv(\wh p_\cF,p_\cF) &\leq \frac{1}{|\cX|}\sum_{x \in \cX} \sum_{j\in[\ell]} \tv(p(\cdot|\pa_j), \wh p(\cdot|\pa_j)) \\
&= \frac{1}{|\cX|}\sum_{x \in \cX} \sum_{j\in[\ell]} \sum_{y_j \in \cY_j} |p(y_j|\pa_j)-\wh p(y_j|\pa_j)| \\
&= \frac{1}{|\cX|}\sum_{x \in \cX} \sum_{j\in[\ell]} \sum_{y_j \in \cY_j} \left|\frac{p_j(y_j,\pa_j)}{q_j(\pa_j)}-\frac{\wh p_j(y_j,\pa_j)}{\wh q_j(\pa_j)}\right| \\ 
&\leq \frac{1}{|\cX|}\sum_{j\in[\ell]} \sum_{x \in \cX} \sum_{y_j \in \cY_j} \left|\frac{p_j(y_j,\pa_j)}{q_j(\pa_j)}-\frac{\wh p_j(y_j,\pa_j)}{ q_j(\pa_j)}\right| + \left|\frac{1}{q_j(\pa_j)}-\frac{1}{\wh q_j(\pa_j)}\right|\wh p_j(y_j,\pa_j)
\end{align*}
Using that $q_j(\pa_j) = 1/|\pa_j|$ uniformly, one finds that
\begin{align*}
\tv(\wh p_\cF,p_\cF) 
&\leq \frac{1}{|\cX|}\sum_{j\in[\ell]} \sum_{x \in \cX} \sum_{y_j \in \cY_j} |\pa_j| \left|p_j(y_j,\pa_j)-\wh p_j(y_j,\pa_j)\right| + \left|\frac{1}{q_j(\pa_j)}-\frac{1}{\wh q_j(\pa_j)}\right| \wh p_j(y_j,\pa_j) \\ 
&= \frac{1}{|\cX|}\sum_{j\in[\ell]} |\pa_j| \times \frac{|\cX|}{|\pa_j|} \tv(\wh p_j, p_j) + |\pa_j| \times \frac{|\cX|}{|\pa_j|} \tv(\wh q_j,q_j) \\ 
&= \sum_{j \in [\ell]} \tv(\wh p_j, p_j)  + \tv(\wh q_j, q_j), 
\end{align*}
where we used that $\sum_{y_j \in \cY_j}\wh p_j(y_j,\pa_j) = \wh q_j(\pa_j)$ by definition. Letting $\ve_j = \ve / 2\ell $ and $\delta_j = \delta / 2\ell$, and using that $q_j \geq 1$ for all $j \in [\ell]$, one find that
$$
p^{\otimes n}(\tv(\wh p_\cF, p_\cF) \leq \ve) \geq 1-\delta
$$
as soon as
$$
n \geq  \frac{C \ell^2}{\ve^2}  \left( \max_{j \in [\ell]} |\pa_j| q_j + \log(2\ell/\delta) \right).
$$
 In particular, letting $\ell^* = \log_2 |\cY| \geq \ell$ and using the fact that  $|\pa_j| \leq |\cX|$, we have the coarser condition
$$
n \geq \frac{C (\ell^*) ^2}{\ve^2} \left( |\cX||\cY| + \log(2\ell^*/\delta)\right),
$$
that holds true uniformly for all factorizations $\cF$. Letting $\aleph$ be the cardinality of all possible factorizations, we find that
$$
p^{\otimes n}(\forall_{\cF}, \tv(\wh p_\cF, p_\cF) \leq \omega/4) \geq 1-\delta
$$
as soon as
$$
n \geq \frac{C  (\ell^*) ^2}{\omega^2} \left( |\cX||\cY| + \log(2\ell^* \aleph/\delta)\right).
$$
We then bound the cardinal $\aleph$. A factorization of $\cX$ is a bijection from $\cX$ to $\prod_{j \in [k]} [n_j]$ where $n_j$ are such that $n_1\times \dots\times n_k = |\cX|$. Because $\ell$ is smaller than $\log_2 |\cX|$, and because the $n_j$'s are smaller than $|\cX|$, there are less such factorizations than the number of functions from $\cX$ to $\cX^{N}$ where $N$ is the floor of $\log_2 |\cX|$, and this number is less than $|\cX|^{|\cX|\log_2 |\cX|}$. Likewise, the number of factorizations of $|\cY|$ is bounded by $|\cY|^{|\cY|\log_2 |\cY|}$. Finally, the number of DAG between two factorizations is less than $(2^{k})^\ell$ where $k$ is the number of factors of $\cX$ and $\ell$ the number of factors of $\cY$. Since $k \leq \log_2|\cX|$ and $\ell \leq \log_2 |\cY|$, one find that
$$
\aleph \leq |\cX|^{|\cX|\log_2 |\cX|} \times |\cY|^{|\cY|\log_2 |\cY|} \times |\cX|^{\log_2 |\cY|}. 
$$
The previous condition can thus be reframed, up to changing the value of $C$, as
$$
n \geq \frac{C}{\omega^2} \left( |\cX||\cY| \log_2 (|\cX||\cY| ) + \log(|\cX||\cY|/\delta)\right).
$$
We let $\cE$ be the event in which for all factorization $\cF$, $\tv(\wh p_\cF,p_\cF) \leq \omega/4$. In this event, one can tell whether a given $\cF$ is in $\cG(p)$ or not. Indeed, if $\cF$ is in $\cG(p)$, then, letting $\cO$ be the trivial factorisation, 
$$
\tv(\wh p_\cF,\wh p_\cO) \leq \tv(\wh p_\cF,p_\cF) + \tv(p,\wh p_\cO) \leq \omega/2,
$$
while if $\cF$ is not in $\cG(p)$,
$$
\tv(\wh p_\cF,\wh p_\cO) \geq \tv(p_\cF,p) - \tv(\wh p_\cF, p_\cF)  - \tv(\wh p_\cO, p) > \omega/2.  
$$ 
We can thus set
$$
\wh \cF \in \argmin \left\{ \ell^2 \max_{j \in [\ell]} |\pa_j| q_j + \ell^2 \log \ell~\middle|~  \tv(\wh p_\cF,\wh p)  \leq \omega/2\right\}.
$$
In case of ties, we consistently pick a value as long as it is possible, and we let $\cF^*$ be the value that $\wh \cF$ takes on the event $\cE$. It holds automatically that
$$
\cF^* \in \argmin \left\{ \ell^2 \max_{j \in [\ell]} |\pa_j| q_j + \ell^2 \log \ell~\middle|~  \cF \in \cG(p)\right\}.
$$
We finally let $\wh p := \wh p_{\wh \cF}$. We let $\cE'$ be the event on which $\tv(\wh p_{\cF^*},p) \leq \ve$. According to the above estimate, the event $\cE \cap \cE'$ has probability at least $1-2\delta$ as soon as
$$
n \geq  \frac{C}{\ve^2} \left( \chi(p) +  \log(2/\delta)\right) \bigvee \frac{C}{\omega^2} \left( |\cX||\cY| \log_2 (|\cX||\cY| ) + \log(|\cX||\cY|/\delta)\right).
$$
Using that $\ve^2 < \omega^2 \times ( |\cX||\cY| \log_2(|\cX||\cY|))^{-1}$, we find that the first term in the RHS is always the greatest, up to a numeric constant. Lastly, notice that on $\cE \cap \cE'$, $\tv(\wh p, p) \leq \ve$, ending the proof.
\end{proof}

\section{Experimental Supplement}

\paragraph{Model specification}
To keep our experimental design simple and in line with LLMs being one of our main sources of inspiration, we use the same MLP architecture as used for the feedforward layers of Mistral's open-source implementation of transformers at the time of writing. See the \texttt{one\_file\_ref} of \url{https://github.com/mistralai/mistral-inference} (commit 26a52a1).
It has three degrees of freedom:
\begin{itemize}[leftmargin=.9cm]
  \item[(P5)] The embedding dimension: $d\in\N$;
  \item[(P6)] The hidden dimension: $h\in\N$;
  \item[(P7)] The number of layers: $L\in\N$.
\end{itemize}
The full MLP corresponds to the composition of $L$ functions $F_i$ of the form:
\[
    F_i:z\in\R^d \mapsto z + W_{i,2}^\top\paren{\sigma\paren{W_{i,1}^{(i)}\frac{z}{\norm{z}}} \odot  W_{i,3} \frac{z}{\norm{z}}}
\]
where $W_{i,1}, W_{i,2}, W_{i,3} \in \R^{h\times d}$, $\sigma$ is the logistic function, and $\odot$ is the element-wise product.
This architecture was found to be more efficient in practice than purely vanilla MLP  \citep{he2015deep,ba2016layernormalization,shazeer2020gluvariantsimprovetransformer}.
Unless otherwise specified, we set $d = 32$, $h = 64$, and $L = 1$.
These choices are motivated by compute-optimal design experiments, to be found in the Supplementary Materials (Figures~\ref{fig:cal-dim} to~\ref{fig:cal-layer-bis}).

\paragraph{Optimizer specification}
To keep the optimization simple and mitigate the number of hyperparameters, we use the Adam optimizer as implemented in PyTorch and with default values for $\beta_1, \beta_2$.
Similarly, we initialize the NN's weights with PyTorch's default scheme. 
We distinguish between two settings regarding the number of epochs.

In the first setting, we focus on quantifying the speed of learning based on different underlying factorizations.
For this setting, we consider small numbers of epochs and use a cosine annealing learning rate schedule, defined as
\[
    \eta_t = \lambda_t\eta, \quad \text{where}\quad \lambda_t = \paren{\frac{\cos(\pi t / T) + 1}{2}} \in [0, 1],
\]
which leads to two additional hyperparameters:
\begin{itemize}[leftmargin=.9cm]
  \item[(P8)] The initial learning rate: $\eta > 0$;
  \item[(P9)] The number of epochs: $T \in \bN$. 
\end{itemize}
We set the initial learning rate to $\eta = 3\cdot 10^{-2}$ and the number of epochs $T$ to be $10^3$.
These choices are motivated by ablation studies to be found in the Supplementary Materials (Figure~\ref{fig:cal-lr}).
In this setting, our figures represent averages over 100 independent runs.

In our second setting, we wait until the training procedure converges to study the optimal point reached by the networks.
This requires a larger number of epochs, which we set to $T = 10^6$. 
For these experiments, we only average our figures over 10 runs.
Ablation studies in the Supplementary Materials (Figure~\ref{fig:cal-scheduler}) show that the cosine learning rate schedule is not optimal in this setting. 
On the one hand, choosing a large initial learning rate, such as $\eta = 3\cdot 10^{-2}$, allows for fast learning at the beginning but leads to instability towards the end of training.
On the other hand, choosing a small learning rate alleviates these instabilities but slows down the training at the beginning.
An analysis of the loss instabilities leads to the following ``custom'' scheduler:
\[
    \log (\eta_t) = \lambda_t \log(\eta) + (1-\lambda_t) \log(3\cdot 10^{-4}),
\]
This allows us to keep the same hyperparameters (P8) and (P9), which we set to $\eta = 3\cdot 10^{-2}$ and $T = 10^6$.

\paragraph{Computing archictures.}
All our experiments were run on a single V100 GPU.

\paragraph{Data calibration}
The data model, corresponding to (P1), (P2), (P3), (P3') and (P4) was chosen to allow for relatively large and sparse graphs (in the sense of Figure~\ref{fig:dag}), with $p(y|x)$ charging a small number of elements.
The other parameters were chosen based on compute optimal considerations, which are detailed below.

\renewcommand{\arraystretch}{1.7}
\begin{table}[h!]
    \centering
    \begin{tabular}{c|l|l|l|}
    \cline{2-4}
     & Name  & Notation & Default values  \\ \cline{2-4} \hhline{~|=|=|=|}
    (P1) & Input factors & $(p_i)_{i \in [k]}$ & $(2)_{i\in[12]}$ \\ \cline{2-4}  
    (P2)& Output factors  & $(q_j)_{i \in [\ell]}$ & $(8)_{j\in[4]}$\\ \cline{2-4} 
    (P3) & Number of parents  & $\card{I_j}$ & $2$  \\ \cline{2-4} 
    (P3') & Connectivity parameter & $\beta$ & --  \\ \cline{2-4} 
    (P4) & Concentration parameter & $\alpha$ & $10^{-1}$  \\ \cline{2-4} 
    (P5) & Embedding dimension  & $d$ & $64$ \\ \cline{2-4} 
    (P6) & Hidden dimension & $h$ & $128$ \\ \cline{2-4} 
    (P7) & Number of layers &$L$  & $1$ \\ \cline{2-4} 
    (P8) & Learning rate & $\eta$ & $3\cdot 10^{-2}$ \\ \cline{2-4} 
    (P9) & Number of epochs & $T$ & $\{10^3, 10^6\}$ \\ \cline{2-4} 
    (P10) & Pourcentage of seen data ({\em generalization setting}) & $\gamma$ & $0.9$ \\ \cline{2-4} 
    \end{tabular}
    \caption{The different parameters and hyperparameters from in Section \ref{sec:expe} and their default values used in the experiments.}
    \label{tab:1}
\end{table}

\paragraph{FLOPs}
In order to define the compute cost of our training, we use an idealized model for floating-point operations (FLOPs), consistent with common practice in similar studies \citep{scalinglawskaplan}. 

\begin{enumerate}
    \item Forward pass
    \begin{enumerate}
        \item The embedding layer is a look-up dictionary, resulting in 0 FLOPs.
        \item For each feedforward block, the total FLOPs $C$ across all layers is given by:
        \[
        C = 6 \times L \times d \times h.
        \]
        Indeed, for a single layer, we have three matrix multiplication of a vector of size $a$ by a matrix of size $a\times b$, where $\{a, b\} = \{d, h\}$, resulting in $2 \times a \times b$ FLOPs for each multiplication.
        \item The output layer, which is a linear layer, has a computational complexity of:
        \[
            C = 2 \times d \times M
        \]
    \end{enumerate}
\end{enumerate}

The backward pass is estimated to require approximately twice the compute of the forward pass \citep{scalinglawskaplan}.

\paragraph{Model calibration experiments}
Given the data model specified by (P1), (P2), (P3), (P4) from Table \ref{tab:1}, as well as the optimization model specified by $T = 10^3$ and $\eta \in \{10^{-1}, 10^{-2}, 10^{-3}\}$, we optimize the model parameters, $d$, $h$ and $L$, greedily one after the other.
We start with $h = 4d$, and $L=1$, leading to Figure \ref{fig:cal-dim}. 
This figure suggests setting $d \in \{64, 128\}$. 
Between these two options, we choose $d=64$ as it reduces the compute cost of our experiments (P5).
From there, Figure \ref{fig:cal-ffn} suggests choosing $h \in \{2d, 3d, \ldots\}$.
Between these options, we choose $h = 2d$ to save compute (P6).
Finally, Figure \ref{fig:cal-layer} explains our choice of $L = 1$ (P7).
Each figure was computed as an average over 100 independent runs.

It should be noted that in our calibration experiments, we choose $N=M=4096$, which is much bigger than $d=64$.
As a consequence, $L$ and $h$ do have a significant impact on the total number of FLOPs.
In order to show a bigger difference in the choice of these parameters, Figures \ref{fig:cal-ffn-bis} and \ref{fig:cal-layer-bis} consider the case where $N = 360$ and $M = 36$.

\begin{figure}[ht]
    \centering
    \includegraphics{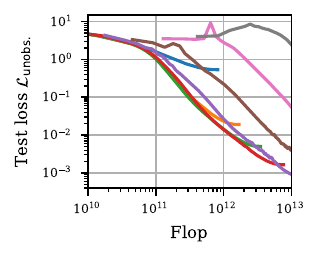}
    \includegraphics{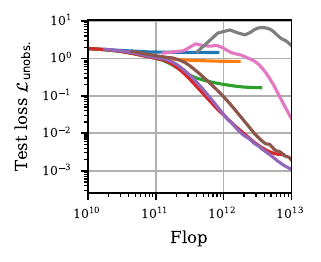}
    \raisebox{2.5em}{\includegraphics{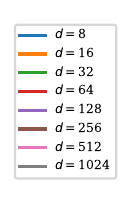}}
    \vspace{-1em}
    \caption{
        (Left) Effect of the embedding dimension $d$ on the test loss with respect to the number of FLOPs in the generalization setting with (P1), (P2), (P3), (P4) from Table \ref{tab:1}, $T=10^3$, $\eta = 10^{-2}$, which is best among the set $\{10^{-1}, 10^{-2}, 10^{-3}\}$, and $h=4d$, $L = 1$. 
        (Right) Same figure yet with $(p_i) = (8)_{i\in[4]}$ and $(q_j) = (4096)_{j\in[1]}$. 
        We observe that choosing $d=64$ yields good results in both experiments.
        This explains our choice of (P5).
    }
    \label{fig:cal-dim}
\end{figure}

\begin{figure}[ht]
    \centering
    \includegraphics{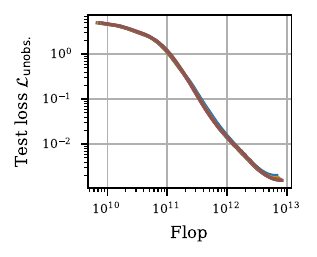}
    \includegraphics{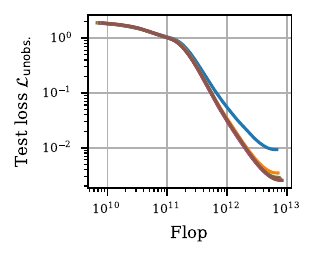}
    \raisebox{2.5em}{\includegraphics{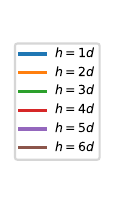}}
    \vspace{-1em}
    \caption{
        (Left) Effect of the hidden dimension $h$ on the test loss with respect to the number of FLOPs in the generalization setting with (P1), (P2), (P3), (P4), (P5) from Table \ref{tab:1}, $T=10^3$, $\eta = 10^{-2}$, which is best among the set $\{10^{-1}, 10^{-2}, 10^{-3}\}$, and $L = 1$. 
        (Right) Same figure yet with $(p_i) = (8)_{i\in[4]}$ and $(q_j) = (4096)_{j\in[1]}$. 
        The choice of $h = 2d$ (P6) yields good results.
    }
    \label{fig:cal-ffn}
\end{figure}

\begin{figure}[ht]
    \centering
    \includegraphics{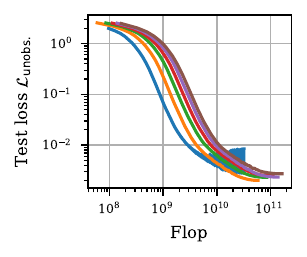}
    \includegraphics{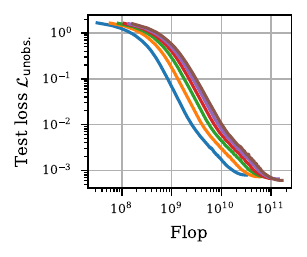}
    \raisebox{2.5em}{\includegraphics{figs/cal/ffn_leg.pdf}}
    \vspace{-1em}
    \caption{
        (Right) Same figure as Figure \ref{fig:cal-ffn}, yet with $(p_i) = (2, 2, 2, 3, 3, 5)$, and $(q_j) = (2, 2, 3, 3)$.
        (Left) Same figure yet with $(p_i) = (3, 4, 5, 6)$, and $(q_j) = (36)$
        This shows that the choice of $h = 2d$ is a decent one in general.
        Although for simple problems where $\bar\chi$ \eqref{eq:AC} is much smaller than $d = 64$, we observe that $h = d$ is more compute-efficient.
    }
    \label{fig:cal-ffn-bis}
\end{figure}

\begin{figure}[ht]
    \centering
    \includegraphics{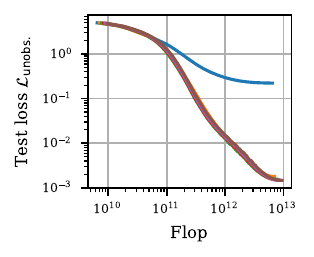}
    \includegraphics{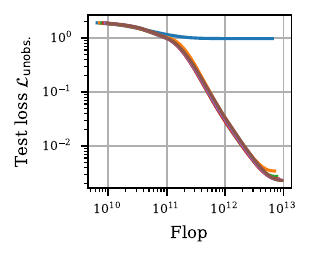}
    \raisebox{2.5em}{\includegraphics{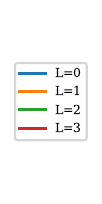}}
    \vspace{-1em}
    \caption{
        (Left) Effect of the number of layers $L$ on the test loss with respect to the number of FLOPs in the generalization setting with (P1), (P2), (P3), (P4), (P5), (P6) from Table \ref{tab:1}, $T=10^3$ and $\eta = 10^{-2}$ which is best among the set $\{10^{-1}, 10^{-2}, 10^{-3}\}$. 
        (Right) Same figure yet with $(p_i) = (8)_{i\in[4]}$ and $(q_j) = (4096)_{j\in[1]}$. 
        The choice of $L = 1$ (P7) yields good results.
    }
    \label{fig:cal-layer}
\end{figure}

\clearpage

\begin{figure}[ht]
    \centering
    \includegraphics{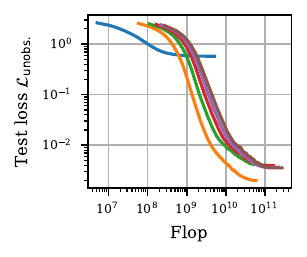}
    \includegraphics{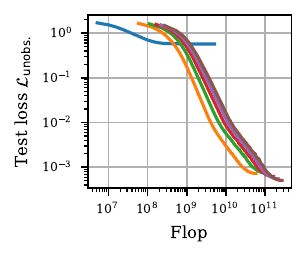}
    \raisebox{2.5em}{\includegraphics{figs/cal/layer_leg.pdf}}
    \vspace{-1em}
    \caption{
        (Right) Same figure as Figure \ref{fig:cal-layer}, yet with $(p_i) = (2, 2, 2, 3, 3, 5)$, and $(q_j) = (2, 2, 3, 3)$.
        (Left) Same figure yet with $(p_i) = (3, 4, 5, 6)$, and $(q_j) = (36)$
        This supports the choice of $L = 1$.
    }
    \label{fig:cal-layer-bis}
\end{figure}

\paragraph{Optimization calibration experiments}
In experiments to quantify the speed of learning, we set $T = 10^3$.
Figure~\ref{fig:cal-lr} shows that choosing $\eta = 3\cdot 10^{-2}$ yields good results in this setting.
In experiments to inspect NNs at convergence, we set $T=10^6$.
Figure \ref{fig:cal-scheduler} shows that the cosine learning-rate schedule is not optimal in our setting, as well as the performance of our custom scheduler:
\begin{equation}
    \label{eq:scheduler}
    \log (\eta_t) = \lambda_t \log(3\cdot 10^{-2}) + (1-\lambda_t) \log(3\cdot 10^{-4}), \qquad \text{where} \qquad \lambda_t = \frac{\cos(\pi t / 10^6) + 1}{2}.
\end{equation}
This is the learning rate scheduler we use for these long runs.

\begin{figure}[ht]
    \centering
    \includegraphics{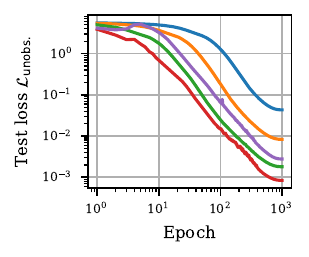}
    \includegraphics{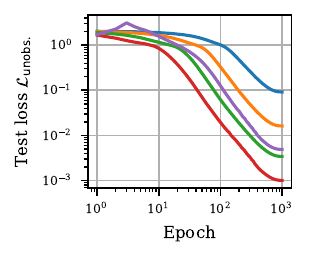}
    \raisebox{2.5em}{\includegraphics{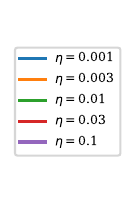}}
    \vspace{-1em}
    \caption{
        (Left) Effect of initial learning rate of the cosine schedule on the test loss with respect to the number of epochs in the generalization setting with (P1), (P2), (P3), (P4), (P5), (P6), (P7) from Table \ref{tab:1}, $T=10^3$. 
        (Right) Same figure yet with $(p_i) = (8)_{i\in[4]}$ and $(q_j) = (4096)_{j\in[1]}$. 
        The choice of $\eta = 3\cdot 10^{-2}$ yields good results in both experiments explaining our choice of (P9) given (P8).
    }
    \label{fig:cal-lr}
\end{figure}

\begin{figure}[ht]
    \centering
    \includegraphics{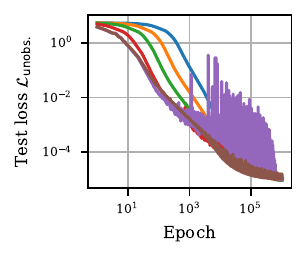}
    \includegraphics{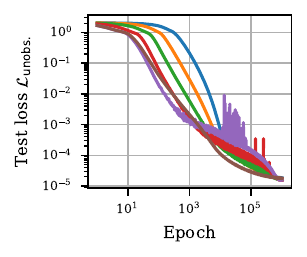}
    \raisebox{2.5em}{\includegraphics{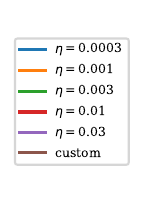}}
    \vspace{-1em}
    \caption{
        Same figure as Figure \ref{fig:cal-lr} yet with $T = 10^6$.
        Instabilities appear after around $10^3$ epochs.
        This suggests that we should modify the scheduler to decrease the learning rate earlier, for example with the ``custom'' scheduling of Eq. \eqref{eq:scheduler} (brown plot).
    }
    \label{fig:cal-scheduler}
\end{figure}

\clearpage 
\paragraph{Additional figures}
In addition to the calibration experiments in this appendix and  the figures in the main text, we report some additional experiments here.
The following figures are the results of averaging over 10 runs, rather than 100 runs.
Figure \ref{fig:compression-bis} repeats the compression study for another choice of experimental parameters.
Figure \ref{fig:emb-dim} investigates the interplay between the approximation complexity $\bar\chi$ and the embedding dimension $d$ in the generalization setting, rather than in the compression setting.
Figure \ref{fig:split-bis} provides an alternative visualization of the results presented in  Figure \ref{fig:split-flop} and illustrates the effect of the data split $\gamma$ and the number of flops on the generalization loss ${\cal L}_{\textrm{unobs.}}$ in the generalization setting for various factorizations.
Note that in order to save compute, Figures \ref{fig:split}, \ref{fig:split-flop} and \ref{fig:split-bis} were made with $d = 32$ (P5) instead of $d=64$.

\begin{figure}[ht]
    \centering
    \includegraphics{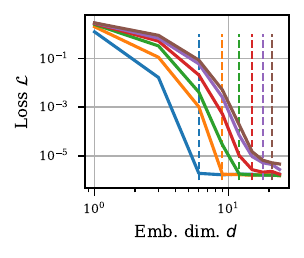}
    \raisebox{2.5em}{\includegraphics{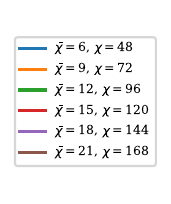}}
    \includegraphics{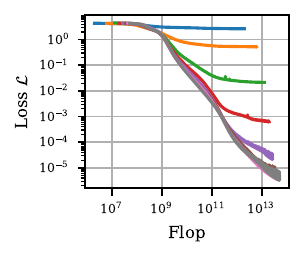}
    \raisebox{2.5em}{\includegraphics{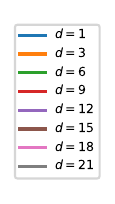}}
    \vspace{-1em}
    \caption{
        Same graphs as Figure \ref{fig:compression} (left) and Figure \ref{fig:compression-flop} (right), except that $(q_j) = (8)_{j\in[3]}$, $(p_i) = (x)_{i\in[4]}$, and $|I_j|=1$, with $x \in [2,7]$ for the plot on the left and $x=6$ for the plot on the right (which results in $\bar\chi = 18$).
    }
    \label{fig:emb-dim}
\end{figure}

\begin{figure}[ht]
    \centering
    \includegraphics{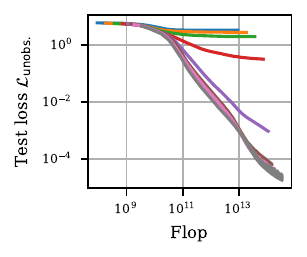}
    \includegraphics{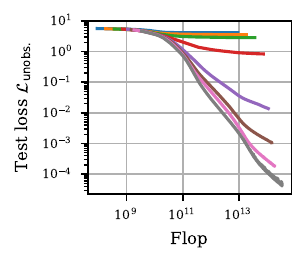}
    \raisebox{2.5em}{\includegraphics{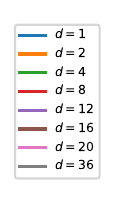}}
    \vspace{-1em}
    \caption{
        (Left) Same figure as Figure \ref{fig:compression-flop} yet in the generalization setting and with $T = 10^5$. (Right) Same figure yet with $|I_j| = 3$ (P3), which sets the compression complexity to $\bar\chi = 32$, instead of $\bar\chi=16$ \eqref{eq:AC}
    }
    \label{fig:compression-bis}
\end{figure}

\begin{figure}[ht]
    \centering
    \includegraphics[width=.28\linewidth]{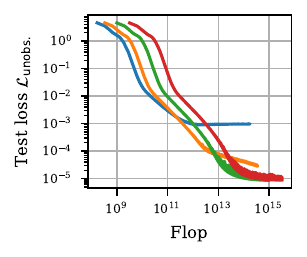}
    \includegraphics[width=.28\linewidth]{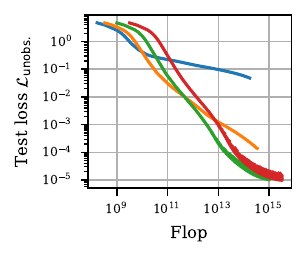}
    \includegraphics[width=.28\linewidth]{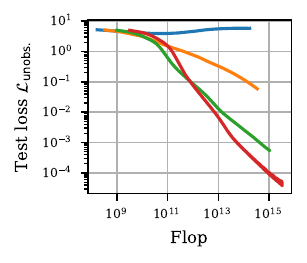}
    \raisebox{2em}{\includegraphics{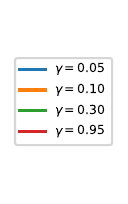}}
    \vspace{-1em}
    \caption{
           (Middle) Value of the loss ${\cal L}_{\textrm{unobs.}}$ on unobserved data in the generalization setting as a function of the number of FLOPs for various data split parameter $\gamma$ (P10). Same graph with $|I_j| = 1$ (P3) on the left and $|I_j| = 4$ (P3) on the right.
    }
    \label{fig:split-bis}
\end{figure}


\end{document}